\documentclass{article}

%% Set \arxivtrue if you want to put authors in and remove checklist.
%% Set \arxivfalse if you want to remove authors and put in checklist.
\newif\ifarxiv
\arxivtrue
%\arxivfalse

% if you need to pass options to natbib, use, e.g.:
%     \PassOptionsToPackage{numbers, compress}{natbib}
% before loading neurips_2021

% ready for submission
%\usepackage{neurips_2021}

% to compile a preprint version, e.g., for submission to arXiv, add add the
% [preprint] option:

\ifarxiv
	\usepackage[preprint]{neurips_2021}
\else
	\usepackage{neurips_2021}
\fi

% to compile a camera-ready version, add the [final] option, e.g.:
%     \usepackage[final]{neurips_2021}/

% to avoid loading the natbib package, add option nonatbib:
%    \usepackage[nonatbib]{neurips_2021}

\usepackage[utf8]{inputenc} % allow utf-8 input
\usepackage[T1]{fontenc}    % use 8-bit T1 fonts
\usepackage{hyperref}       % hyperlinks
\usepackage{url}            % simple URL typesetting
\usepackage{booktabs}       % professional-quality tables
\usepackage{amsfonts}       % blackboard math symbols
\usepackage{nicefrac}       % compact symbols for 1/2, etc.
\usepackage{microtype}      % microtypography
\usepackage{xcolor}         % colors
\usepackage{will_style}

\title{Can we globally optimize cross-validation loss? Quasiconvexity in ridge regression}

% The \author macro works with any number of authors. There are two commands
% used to separate the names and addresses of multiple authors: \And and \AND.
%
% Using \And between authors leaves it to LaTeX to determine where to break the
% lines. Using \AND forces a line break at that point. So, if LaTeX puts 3 of 4
% authors names on the first line, and the last on the second line, try using
% \AND instead of \And before the third author name.

\author{%
  William T. Stephenson\thanks{Alternate email: wtstephe@gmail.com} \\
  MIT \\
  \texttt{wtstephe@mit.edu} \\
  % examples of more authors
  % \And
  \And
  Zachary Frangella \\
  Cornell \\
  \texttt{zjf4@cornell.edu}
  \And
  Madeleine Udell \\
  Cornell \\
  udell@cornell.edu
  \And
  Tamara Broderick \\
  MIT \\
  \texttt{tbroderick@mit.edu}
}

\begin{document}

\maketitle

\begin{abstract}
Models like LASSO and ridge regression are extensively used in practice due to their interpretability, ease of use, and strong theoretical guarantees. 
Cross-validation (CV) is widely used for hyperparameter tuning in these models, but do practical optimization methods minimize the true out-of-sample loss? 
A recent line of research promises to show that the optimum of the CV loss matches the optimum of the out-of-sample loss (possibly after simple corrections). 
It remains to show how tractable it is to minimize the CV loss.
In the present paper, we show that, in the case of ridge regression, the CV loss may fail to be quasiconvex and thus may have multiple local optima. 
We can guarantee that the CV loss is quasiconvex in at least one case: when the spectrum of the covariate matrix is nearly flat and the noise in the observed responses is not too high. More generally, we show that quasiconvexity status is independent of many properties of the observed data (response norm, covariate-matrix right singular vectors and singular-value scaling) and has a complex dependence on the few that remain. We empirically confirm our theory using simulated experiments.
\end{abstract}

\section{Introduction}

Linear models, including LASSO and ridge regression,
are widely used for data analysis across diverse applied disciplines.
Linear models are often preferred since they are straightforward to apply in various senses.
In particular, (1) their parameters are readily interpretable.
(2) They have strong theoretical guarantees on quality.
And (3) standard optimization tools are often assumed to find useful parameter and hyperparameter values.
Despite their seeming simplicity, though,
mysteries remain about the quality of inference in linear models.
Consider cross-validation (CV) \citep{stone:1974:earlyCV,allen:1974:PRESS},
the de facto standard for hyperparameter selection across machine learning methods \citep{musgrave:2020:realityCheck}.
CV is an easy-to-evaluate proxy for the true out-of-sample loss.
Is it a good proxy? \citep{homrighausen:2014:l1LOO,homrighausen:2013:l1Kfold,chetverikov:2020:l1Kfold,hastie:2020:l2RidgeCV}
give conditions under which the optimum of the CV loss (possibly with some mild corrections) matches the optimum of the out-of-sample loss in LASSO and ridge regression.
To complete the picture, we must understand whether standard methods
for minimizing the CV loss
find a global minimum.

It would be easy to find a unique minimum of the CV loss if the CV loss were convex.
Alas (though perhaps unsurprisingly),f
we show below that in essentially every case of interest the CV loss is not convex.
Indeed, the usual introductory cartoon of CV loss (left panel of \cref{fig:nonQVXExample}; see also Fig.\ 5.9 of \citet{ESL2} or Fig.\ 1 of \citet{rad:2018:detailedALOO}) is not convex.
But the cartoon CV loss still exhibits a single global minimum and
would be easy to globally minimize with popular approaches like gradient-based methods \citep{do:2007:gradientHyperparams1,maclaurin:2015:gradientHyperparams2,pedregosa:2016:hyperparameterOpt,lorraine:2020:gradientHyperparams3} or grid search \citep{bergstra:2012:random,pedregosa:2011:scikit,hsu:2003:practical}.
Indeed, a more plausible possibility (which holds for the typical cartoon CV loss)
is that the CV loss might be \emph{quasiconvex}.
In one dimension, quasiconvexity implies that any local optimum is a global optimum.

\begin{figure}
  \centering
 % \begin{tabular}{ccc}
  \includegraphics[width=0.32\textwidth]{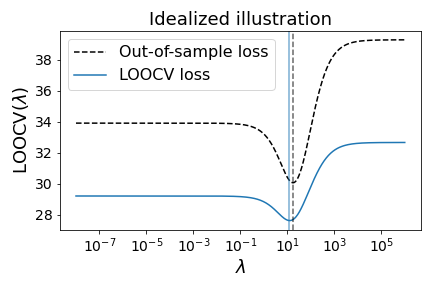}
  \includegraphics[width=0.32\textwidth]{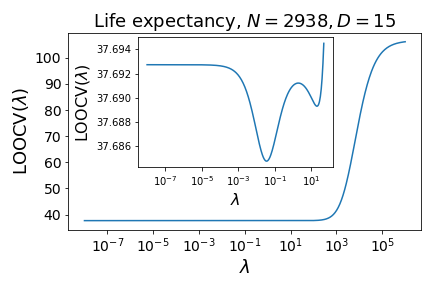} %&
  \includegraphics[width=0.32\textwidth]{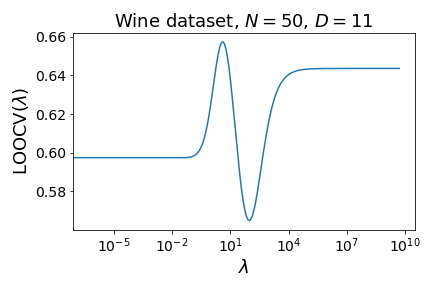} %&
 % \end{tabular}
  \caption{
  %Cartoon illustration of $\L$ and example of $\L$ being non-quasiconvex on a real datasets.
  (\emph{Left}): Idealized illustration of the leave-one-out CV loss, $\L$, (blue) and the true out-of-sample loss (black). The minimum of each curve is marked with a vertical line of the corresponding color. (\emph{Center}): CV loss for a life-expectancy prediction problem after some standard data pre-processing (\cref{cond:dataProcessing} of \cref{sec:setup}). (\emph{Right}): CV loss for wine-quality prediction problem on a subset of $N=50$ data points after standard data pre-processing (\cref{cond:dataProcessing} of \cref{sec:setup}).}
  \label{fig:nonQVXExample}
\end{figure}

In what follows, we show that this cartoon need not hold in general.
We consider using the leave-one-out CV (LOOCV) loss, $\L$, to select
the regularization hyperparameter in $\ell_2$-regularized linear regression (i.e.\ ridge regression).
Our first contribution is to demonstrate that this loss can be non-quasiconvex;
see the middle and right panel of \cref{fig:nonQVXExample} for some real-data examples,
which we describe in detail in \cref{sec:qvx}.
Our second contribution is to characterize which aspects of the covariate matrix and observed responses affect quasiconvexity.
We prove that the norm of the responses, the scale of the singular values of the covariate matrix,
and the right singular vectors of the covariate matrix all have no effect on the quasiconvexity of $\L$.
While this result places substantial constraints on what drives the quasiconvexity of $\L$,
we show that the quasiconvexity of $\L$ is unfortunately still a complex function of the remaining quantities.
Our third contribution is to prove conditions under which $\L$ is guaranteed to be quasiconvex.
In particular, we show that if (1) the covariate matrix has a singular value spectrum sufficiently close to uniform,
(2) the least-squares estimator fits the training data sufficiently well,
and (3) the left singular vectors of the covariate matrix are sufficiently regular,
then $\L$ is guaranteed to be quasiconvex.
While the conditions of our theory are deterministic,
we show that they have natural probabilistic interpretations;
as a corollary to our theory, we demonstrate that many of our conditions are satisfied
either empirically or theoretically
by well-specified linear regression problems with i.i.d.\ sub-Gaussian covariates and moderate signal-to-noise ratios.
Through empirical studies, we validate the conclusions of our theory and the necessity of our assumptions.

\section{Setup and notation}
\label{sec:setup}

For $n \in \{1,\dots, N\}$, we observe covariates $x_n \in \R^D$ and responses $y_n \in \R$.
In the present work, we consider the low to modest-dimensional case $D < N$.
We are interested in learning a linear model between the covariates and responses, $\iprod{x_n}{\theta} \approx y_n$, for some parameter $\theta \in \R^D$.
In ridge regression, i.e.\ $\ell_2$-regularized linear regression, we take some $\lambda > 0$ and estimate:
\begin{equation}
	\hat\theta(\lambda) := \argmin_{\theta\in\R^D} \sum_{n=1}^N (\iprod{x_n}{\theta} - y_n)^2 + \frac{\lambda}{2} \n{\theta}_2^2.
	\label{l2Regularization}
\end{equation}
The regularization parameter $\lambda$ is typically chosen by minimizing the cross-validation (CV) loss. Here we study the leave-one-out CV (LOOCV) loss:
\begin{equation}
	\L(\lambda) := \sum_{n=1}^N \left( \iprod{x_n}{\thetan(\lambda)} - y_n \right)^2,
	\label{L}
\end{equation}
where $\thetan(\lambda)$ is the solution to \cref{l2Regularization} with the $n$th datapoint left out.
Let the covariate matrix $X \in \R^{N \times D}$ be the matrix with rows $x_n$, and let the vector $Y \in \R^N$ be the vector with entries $y_n$.
We assume $X$ and $Y$ have undergone standard data pre-processing.
\begin{condition} \label{cond:dataProcessing}
%The $X$ and $Y$ used to construct $\L$ in \cref{L} satisfy the following.
	$Y$ is zero-mean, and $X$ has zero-mean, unit variance columns. Equivalently, where $\1 \in \R^N$ is the vector of all ones, $\1^T Y = 0$ and $X^T \1 = \mathbf{0} \in \R^D$ and for all $d = 1, \dots, D$, $\sum_{n=1}^N x_{nd}^2 = N$.
\end{condition}
Preprocessing $X$ and $Y$ to satisfy  \cref{cond:dataProcessing} represents standard best practice for ridge regression.
First, using an unregularized bias parameter in \cref{l2Regularization} and setting $Y$ to be zero-mean are equivalent;
we choose to make $Y$ zero-mean, as it simplifies our analysis below.
The conditions on the covariate matrix $X$ are important to ensure the use of $\ell_2$-regularization is sensible.
In particular, \cref{l2Regularization} penalizes all coordinates of $\theta$ equally.
If e.g.\  some columns of $X$ are measured in different scales or are centered differently,
this uniform penalty will be inappropriate.

\section{LOOCV loss is typically not convex and need not be quasiconvex}
\label{sec:qvx}

If the LOOCV loss $\L$ were convex or quasiconvex in $\lambda$, then any local minimum of $\L$ would be a global minimum, and we could trust
 gradient-based optimization methods or grid search methods to return a value near a global minimum. We next see that unfortunately $\L$ is typically not convex and is often not even quasiconvex.
First we show that, in essentially all cases of interest, $\L$ is \emph{not} convex.
\begin{prop} \label{prop:LnotConvex}
	If $\lambda = \infty$ is not a minimum of $\L$, then $\L$ is not a convex function.
\end{prop}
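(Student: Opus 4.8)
The plan is to prove the contrapositive: if $\L$ is convex on $(0,\infty)$, then $\lambda=\infty$ is a minimizer. The two ingredients are the value of $\L$ in the limit $\lambda\to\infty$ and an elementary monotonicity property of convex functions, phrased through difference quotients. First I would pin down the limiting behavior. As $\lambda\to\infty$ the ridge penalty dominates, so each leave-one-out estimator satisfies $\thetan(\lambda)\to 0$; hence every residual $\iprod{x_n}{\thetan(\lambda)} - y_n \to -y_n$, and therefore $\L(\lambda) \to \sum_{n=1}^N y_n^2 = \n{Y}_2^2$. Denote this limit by $\L(\infty)$; it is exactly the LOOCV loss of the null model $\hat\theta=0$, which is the natural value to attach to the point $\lambda=\infty$. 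Note that $\L$ is finite and continuous on all of $(0,\infty)$, since $X^TX + (\lambda/2)I$ is positive definite there, so the only boundary requiring attention is $\lambda\to\infty$.

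Next I would run the convexity argument. Suppose for contradiction that $\L$ is convex yet $\lambda=\infty$ is \emph{not} a minimizer. The latter must be read as $\inf_{\lambda>0}\L(\lambda) < \L(\infty)$ (if instead $\L(\lambda)\ge\L(\infty)$ for all finite $\lambda$ with $\L(\lambda)\to\L(\infty)$, then $\infty$ would attain the infimum and hence \emph{would} be a minimizer). So there is a finite $\lambda_0>0$ with $\L(\lambda_0) < \L(\infty)$. For $\lambda > \lambda_0$ consider the secant slope
\[
 q(\lambda) := \frac{\L(\lambda) - \L(\lambda_0)}{\lambda - \lambda_0}.
\]
Convexity of $\L$ implies, by the three-slopes (monotone-secant) property, that $q$ is non-decreasing in $\lambda$ on $(\lambda_0,\infty)$. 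On the other hand, as $\lambda\to\infty$ the numerator tends to the finite constant $\L(\infty)-\L(\lambda_0)$ while the denominator diverges, so $q(\lambda)\to 0$. But $\L(\infty)-\L(\lambda_0)>0$ forces $\L(\lambda)-\L(\lambda_0)>0$ for all sufficiently large $\lambda$, hence $q(\lambda)>0$ there; being non-decreasing, $q$ is then bounded below by a positive constant for all large $\lambda$, contradicting $q(\lambda)\to 0$. Thus $\L$ cannot be convex, which is the desired conclusion.

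The argument is short, and the only care points are (i) justifying that $\L(\infty)$ is finite and correctly identified as $\n{Y}_2^2$, which follows directly from $\thetan(\lambda)\to 0$, and (ii) reading ``$\lambda=\infty$ is not a minimum'' as guaranteeing a strictly smaller value at some finite $\lambda_0$. I expect the main (and rather minor) obstacle to be presentational: treating $\lambda=\infty$ as a legitimate comparison point and invoking secant-monotonicity cleanly, rather than any substantive difficulty. If a derivative-based route is preferred, the same conclusion follows by observing that $\L'(\lambda)\to 0$ as $\lambda\to\infty$ (since $\L$ is rational in $\lambda$ with a finite limit) together with the fact that convexity makes $\L'$ non-decreasing; these force $\L'\le 0$ everywhere, so $\L$ is non-increasing and its infimum is attained in the limit $\lambda\to\infty$.
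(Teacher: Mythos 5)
Your proof is correct, and it rests on the same two pillars as the paper's argument: the observation that $\lim_{\lambda \to \infty} \L(\lambda) = \sum_{n=1}^N y_n^2 < \infty$, and the fact that convexity is incompatible with such a finite limit once some finite $\lambda_0$ achieves a strictly smaller value. The difference is in how the second pillar is executed. The paper works with derivatives: it asserts a maximal critical point $\lambda^*$, sets $\delta := \L'(\lambda^*+1) > 0$, and uses $\L'' \geq 0$ to conclude $\L(\lambda) \geq \delta \lambda$ for large $\lambda$, so $\L \to \infty$, contradicting the finite limit. You instead use the monotone-secant (three-slopes) property: the secant slopes $q(\lambda)$ out of $\lambda_0$ must be eventually positive and non-decreasing, hence bounded below by a positive constant, while at the same time tending to $0$ because the numerator converges and the denominator diverges. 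Your route is slightly more elementary and also a bit more careful than the paper's: it needs no differentiability of $\L$, and it avoids the paper's claim that a maximal zero of $\L'$ exists --- a claim that, as literally stated, need not hold (nothing in the hypotheses rules out, a priori, a convex $\L$ that is strictly increasing on all of $(0,\infty)$ and so has no critical point; the contradiction still goes through in that case, but the paper's first step does not literally apply). You also make explicit the reading of ``$\lambda = \infty$ is not a minimum'' as $\inf_{\lambda > 0} \L(\lambda) < \L(\infty)$, which the paper leaves implicit. Your closing derivative-based alternative is essentially the paper's argument, so the two proofs are best viewed as the same strategy with your secant formulation being the tighter implementation.
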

\begin{proof}
	For the sake of contradiction, assume $\L$ is convex and $\lambda = \infty$ is not a minimum of $\L$.
	This implies that there is some maximal $\lambda^* < \infty$ such that $\L'(\lambda^*) = 0$.
	Let $\delta := \L'(\lambda^* + 1)$.
	By convexity, $\L'' \geq 0$, so we know that $\delta > 0$ and that for $\lambda \geq \lambda^* + 1$, we have $\L'(\lambda) \geq \delta$.
	Thus for $\lambda \geq \lambda^* + 1$, we have $\L(\lambda) \geq \lambda \delta$.
	So $\lim_{\lambda \to \infty} \L(\lambda) = \infty$.
	However, inspection of $\L$ shows $\lim_{\lambda \to \infty} \L(\lambda) = \sum_{n=1}^N y_n^2 < \infty$, which is a contradiction.
\end{proof}
We say that the result covers essentially all cases of interest:
if $\L$ continues to decrease as $\lambda \rightarrow \infty$,
then there is so little signal in the data that the zero model $\theta = \mathbf{0} \in \R^D$
is the optimal predictor according to LOOCV.

Although $\L$ is generally not convex,
$\L(\lambda)$ might still be easy to optimize if it satisfies an appropriate
generalized notion of convexity.
% we might hope that a generalization of convexity might still hold for $\L$.
% Ideally we would like to find generalization that confers practically-useful properties
%  for optimization of $\L(\lambda)$.
 To that end, we recall the definition of quasiconvexity.
\begin{defn} \label{defn:quasiconvexity}
	A function $f: \R^p \to \R$ is \emph{quasiconvex} if its level sets are convex.
\end{defn}
In one-dimension (i.e.\ $p=1$ in \cref{defn:quasiconvexity}), quasiconvexity guarantees that any local optimum is a global optimum, just as convexity does.
This property is often the key consideration in practical optimization algorithms.
Moreover, it is not unreasonable to hope that the CV loss is quasiconvex:
typical illustrations of the CV loss are not convex but are
quasiconvex; see e.g.\ \citet[Fig.\ 5.9]{hastie:2015:sls}, \citet[Fig.\ 1]{rad:2018:detailedALOO}, or the left panel of \cref{fig:nonQVXExample}.
Illustrations of the out-of-sample loss are also typically quasiconvex; see e.g.\ Fig.\ 3.6 of \citet{PRML}.

Unfortunately, we next demonstrate that the CV loss derived from real data analysis problems can be non-quasiconvex.
Our first dataset contains $N = 2{,}938$ observations of life expectancy,
along with $D = 20$ covariates such as country of origin or alcohol use;
see \cref{app:realData} for a full description.
In this case, after pre-processing according to \cref{cond:dataProcessing}, $\L$ for the full dataset is quasiconvex.
But now consider some additional standard data pre-processing.
Practitioners often perform principal component regression (PCR) with the aim of reducing noise in the estimated $\theta$.
That is, they take the singular value decomposition of $X = USV$
and produce an $N \times R$ dimensional covariate matrix $X'$ by retaining just
the top $R$ singular values of $X$: $X' = U_{\cdot,:R} S_{:R}$.
With this pre-processing, the resulting LOOCV curve $\L$ is non-quasiconvex for many values of $R$;
in the center panel of \cref{fig:nonQVXExample} we show one example for $R = 15$.

Our second dataset consists of recorded wine quality of $N = 1{,}599$ red wines.
The goal is to predict wine quality from $D = 11$ observed covariates relating to the chemical properties of each wine; see \cref{app:realData} for a full description.
We find that subsets of this dataset often exhibit non-quasiconvex $\L$.
In the right panel of \cref{fig:nonQVXExample}, we show $\L$ for a subset of size $N = 50$.
We see that this plot contains at least two local optima, with substantially different values of $\lambda$ and substantially different values of the loss.
A gradient-based algorithm initialized sufficiently far to the left would not find the global optimum,
and grid search without sufficiently large values would not find the global optimum.

Now we know that $\L$ can be non-quasiconvex for real data.
Given the difficulty of optimizing a function with several local minima,
we next seek to understand \emph{when} $\L$ is quasiconvex or not.

\section{What does the quasiconvexity of $\L$ depend on?}
\label{sec:qvxDependence}

We have seen that $\L$ can be quasiconvex or non-quasiconvex,
depending on the data at hand. If we could determine the quasiconvexity of $\L$ from the data,
we might better understand how to efficiently tune hyperparameters from the CV loss.
In what follows, we start by showing that the quasiconvexity of $\L$ is, in fact,
independent of many aspects of the data (\cref{prop:whatMatters}).
We will see, however, that the dependence of quasiconvexity on the remaining aspects (though they are few) is complex.

A linear regression problem has a number of moving parts.
The response $Y$ may be an arbitrary vector in $\R^N$, and
the covariate matrix $X$ can be written in terms of its singular values and left and right singular vectors.
More precisely, let $X = USV^T$ be the singular value decomposition of the covariate matrix $X$,
where $U \in \R^{N \times D}$ is an $N \times D$ matrix with orthonormal columns,
$S$ is a diagonal matrix with non-negative diagonal,
and $V \in \R^{D \times D}$ is an orthonormal matrix.
With this notation in hand, we can identify aspects of the problem that
do not contribute to quasiconvexity in the following result, which is proved in \cref{app:whatMatters}.
\begin{prop} \label{prop:whatMatters}

	The quasiconvexity of $\L$ is independent of
	\begin{enumerate}
		\item the matrix of right singular vectors, $V$,
		\item the norm of the responses, $\n{Y}_2$, and
		\item the scaling of the singular values (i.e.\ changing $S$ into $S / c$ for $c \in \R_{> 0}$),
	\end{enumerate}
	in the sense that altering any of these quantities does not change whether or not $\L$ is quasiconvex.
\end{prop}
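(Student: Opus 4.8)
The plan is to reduce all three claims to elementary invariances of quasiconvexity once $\L$ is written in closed form. The key tool is the standard leave-one-out shortcut formula for ridge regression. Writing the full-data hat (smoother) matrix as $H(\lambda) := X(X^TX + \tfrac{\lambda}{2}I)^{-1}X^T$, so that the vector of fitted values is $\hat Y = H(\lambda)Y$, the classical leave-one-out identity gives the $n$th LOOCV residual as $y_n - \iprod{x_n}{\thetan(\lambda)} = (y_n - \hat y_n)/(1 - H_{nn}(\lambda))$. Hence
\[
\L(\lambda) = \sum_{n=1}^N \frac{\big((I - H(\lambda))Y\big)_n^2}{\big(1 - H_{nn}(\lambda)\big)^2}.
\]
First I would substitute the SVD $X = USV^T$ to diagonalize the hat matrix. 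Using $U^TU = I$ and $VV^T = I$, one finds $H(\lambda) = U\Phi(\lambda)U^T$, where $\Phi(\lambda)$ is diagonal with entries $\phi_d(\lambda) = s_d^2/(s_d^2 + \lambda/2)$ and the $s_d$ are the singular values. The whole point of this computation is that $V$ cancels entirely, so $\L$ depends on the covariate matrix only through $U$ and the singular values, together with the response $Y$.

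Given this normal form, claim (1) is immediate: altering $V$ while holding $U$, $S$, and $Y$ fixed leaves $H(\lambda)$, and hence the entire function $\L$, literally unchanged, so its quasiconvexity is trivially preserved. For claim (2), the way to change $\n{Y}_2$ (recall $H$ does not depend on $Y$) is to replace $Y$ by $cY$ with $c > 0$; this scales every LOOCV residual by $c$ and therefore yields $\L \mapsto c^2\L$. Since multiplying a function by a positive constant maps the sublevel set $\{\L \le t\}$ to $\{\L \le t/c^2\}$ without changing which sets are convex, quasiconvexity is preserved.

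The one transformation requiring genuine care is claim (3). Substituting $s_d \mapsto s_d/c$ into $\phi_d$ gives $(s_d/c)^2/\big((s_d/c)^2 + \lambda/2\big) = s_d^2/(s_d^2 + c^2\lambda/2) = \phi_d(c^2\lambda)$, so the rescaled hat matrix evaluated at $\lambda$ equals the original hat matrix evaluated at $c^2\lambda$. Consequently $\L$ is merely reparametrized as $\lambda \mapsto \L(c^2\lambda)$. The sublevel set $\{\lambda : \L(c^2\lambda) \le t\}$ is the original sublevel set scaled by $1/c^2 > 0$, and positive scaling sends intervals to intervals, so quasiconvexity is again preserved. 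The main obstacle here is not any single estimate but recognizing the right normal form: the diagonalization $H = U\Phi U^T$ is exactly what exposes the three transformations as, respectively, an identity, a rescaling of the function's values, and a rescaling of its domain, each of which preserves quasiconvexity by \cref{defn:quasiconvexity}. I would close by noting that all arguments take place on the one-dimensional domain $\lambda \in \R_{>0}$, where convex level sets are precisely intervals, so the claim that positive scaling preserves convexity of the level sets is transparent.
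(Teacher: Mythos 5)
Your proposal is correct and follows essentially the same route as the paper's own proof: both rewrite $\L$ via the leave-one-out shortcut formula, substitute the SVD to show $V$ cancels, observe that scaling $Y$ scales $\L$ by a constant, and show that rescaling $S$ merely reparametrizes $\lambda \mapsto \lambda/c^2$, each of which preserves quasiconvexity. Your formulation via the hat matrix $H(\lambda) = U\Phi(\lambda)U^T$ packages the paper's separate treatment of $Q_n(\lambda)$ and the fitted values into one object, but the substance is identical.
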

\begin{rem} \label{rem:reduction}
Assume \cref{cond:dataProcessing} holds.
Then by \cref{prop:whatMatters}, without loss of generality we may (and do) assume
that $V = I_D$ and that $Y$ is a vector on the unit $(N-2)$-sphere.
\end{rem}
Recall $X$ has zero-mean columns by preprocessing the data (\cref{cond:dataProcessing}).
By \cref{prop:whatMatters}, we assume without loss of generality $V = I_D$.
Thus, the columns of $X$ have zero mean when $U^T \mathbf{1} = 0$,
where $\mathbf{1} \in \R^N$ is the vector of all ones.
Hence the quasiconvexity of $\L$ depends on three quantities:
(1) the matrix of left singular vectors, $U$, an orthonormal matrix with $\mathbf{1} \in \R^N$ in its left null-space, (2) the (normalized) vector $Y$ which sits on the unit $(N-2)$-sphere, and (3) the (normalized) singular values.

Now that we know the quasiconvexity of $\L$ depends on only three quantities,
we might hope that quasiconvexity would be a simple function of the three.
To investigate this dependence, we consider the case of $N = 3$ and $D = 2$,
since this case is particularly easy to visualize.
To see why it is easy to visualize, first note that $Y$ is a three-dimensional vector.
But due to the constraints $\| Y\|_2 = 1$ and $\mathbf{1}^T Y = 0$,
$Y$ can be taken to sit on the unit circle and hence can be parametrized by a scalar.
Second, note that the matrix of left singular vectors $U$ is parameterized by two orthonormal vectors,
$U_{\cdot1}$ and $U_{\cdot2}$, each on the unit 2-sphere.
As both vectors must be orthogonal to $\1 \in \R^3$,
we can parameterize $U_{\cdot 1}$ and $U_{\cdot 2}$ by two orthonormal vectors on the unit circle.
We parametrize $U_{\cdot 1}$ by a scalar that determines $U_{\cdot 2}$
up to a rotation of $U_{\cdot 2}$ by $\pi$ .
We fix a rotation for $U_{\cdot 2}$ relative to $U_{\cdot 1}$ and singular values $S$,
so that the quasiconvexity of $\L$ is parameterized by two scalars.

\begin{figure}
\centering
	\begin{tabular}{ccc}
		\includegraphics[scale=0.27]{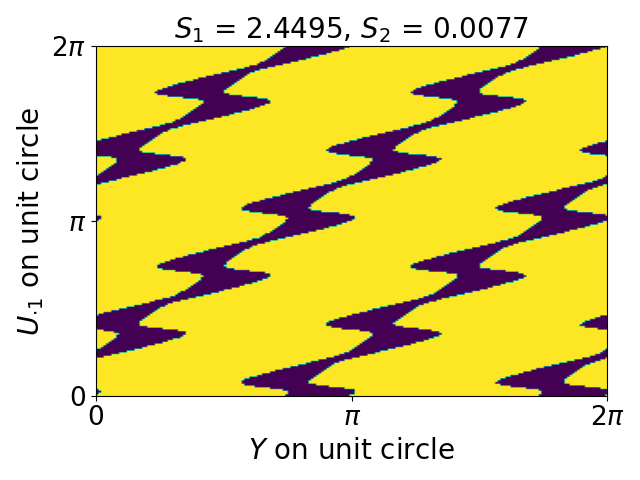} &
		\includegraphics[scale=0.27]{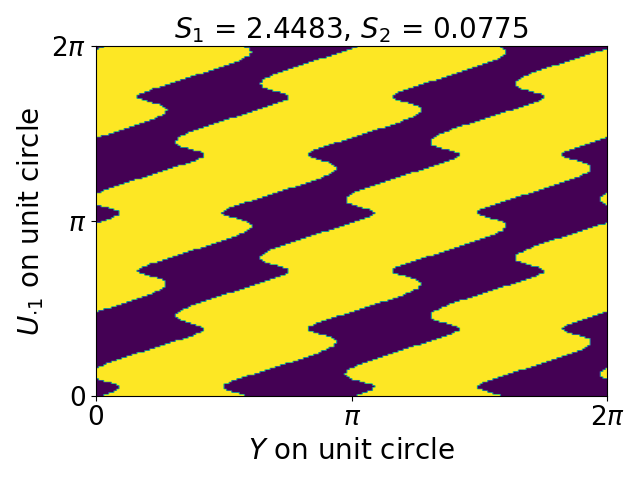} &
		\includegraphics[scale=0.27]{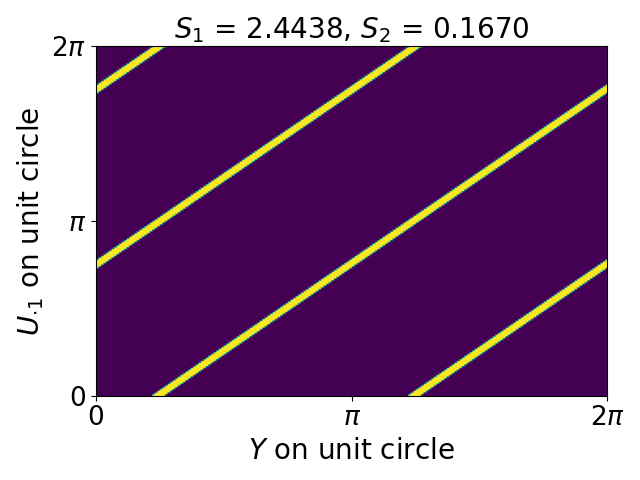}
	\end{tabular}
\caption{How does quasiconvexity depend on $Y$ and $U$, for data with $N=3$ and $D=2$?
The left, center, and right panels each correspond to a different setting of the singular values $S$.
We divide the unit circle for each of $U$ and $Y$ into 100 equally spaced points.
We check if $\L$ is quasiconvex over this $200 \times 200$ grid.
If $\L$ is quasiconvex at a point, that point is colored dark purple;
if $\L$ is not quasiconvex, the point is yellow.}
\label{fig:qvxComplicated}
\end{figure}

\cref{fig:qvxComplicated} is the resulting visualization. Precisely, to make \cref{fig:qvxComplicated}, we fix an orientation between $U_{\cdot 1}$ and $U_{\cdot 2}$: here, a rotation of $\pi / 2$ on the unit circle. We create a uniform grid over the unit circles for $U_{\cdot 1}$ and $Y$. \cref{fig:qvxComplicated} visualizes the quasiconvexity of $\L$ as we move over this grid for three different settings of the singular values, corresponding to the left, center, and right panels.
We see that, in general, $U$, $Y$, and $S$ have a complicated interaction to determine the quasiconvexity of $\L$. The values of $Y$ that produce quasiconvexity depend on $U$.
No setting of $Y$ or $U$ guarantees quasiconvexity.

One trend does seem clear from \cref{fig:qvxComplicated}: as the singular values become more similar (moving from left to right in the panels of \cref{fig:qvxComplicated}), the fraction of $Y$ values and $U$ values that correspond to non-quasiconvexity (yellow regions) shrinks.
Based on this behavior, one might conjecture that a sufficiently uniform spectrum of the covariate matrix could guarantee the quasiconvexity of $\L$.

\section{Quasiconvexity of $\L$ with a nearly uniform spectrum $S$} \label{sec:uniformSpectrum}

We now build on the conjecture of the previous section to show that we can, in fact, guarantee quasiconvexity in certain cases.
In particular, we will show conditions under which a sufficiently uniform spectrum $S$ of the covariate matrix $X$
guarantees that $\L$ is quasiconvex.

One might hope that for large $N$, eventually any $Y$ or $U$ would yield quasiconvexity.
Even when $S$ is exactly uniform, our experiments in \cref{sec:uniformSpectrumExperiments}
show that we cannot expect such a statement.
Rather, we will devise conditions on $Y$ and $U$ to avoid ``extreme'' settings for either quantity.
With these conditions, our main theorem will show that a sufficiently flat spectrum $S$ does indeed guarantee quasiconvexity of $\L$.
When our theorem applies, we can safely terminate an optimization procedure
at the first local minimum of $\L$ that we discover.

We first establish notation and then state our assumptions.
\begin{defn}
	Let $\hat\theta := \big(X^T X\big)\inv X^T Y$ be the least-squares estimate.
	Define the least-squares residuals $\Ehat := Y - X\hat\theta$.
	Let $\ehat$ be the $n$th entry of $\Ehat$.
\end{defn}
Note that $\hat\theta$ is well-defined since we have assumed that $D < N$ and that $X$ is full-rank.
For the tractability of our theory, all of our assumptions and conclusions will be asymptotic in $N$; in particular, our assumptions will use big-O and little-o statements, which are to be taken with respect to $N$ growing large.
Since LOOCV is useful precisely for finite $N$, we are careful to show in our experiments (\cref{sec:uniformSpectrumExperiments}) that these asymptotics take hold for small $N$.

Our first assumption concerns the magnitude of $\Ehat$.
\begin{assumption}
	$(1/N)\sum_{n=1}^N \ehat^2$ is $O(1)$ (i.e.\ it does not grow with $N$).
	\label{assum:ONehat}
\end{assumption}
This assumption is fairly lax.
For example, suppose our linear model is well-specified. In particular, suppose there exists some $\theta^* \in \R^D$ such that $y_n = \iprod{x_n}{\theta^*} + \eps_n$ where the $\eps_n$ are i.i.d.\ $N(0,\sigma^2)$ for some $\sigma > 0$.
Stack the $\eps_n$ into a vector $E \in \R^N$. Then $\|\hat E \|^2 = \| (I_N - U U^T) E \|^2 < \| E \|^2$. Since $(1/N)\| E \|^2$ is $O(1)$ almost surely, it follows that \cref{assum:ONehat} holds almost surely in this well-specified linear model.
We emphasize that \cref{assum:ONehat} depends on the residuals of the least squares estimate,
not (directly) on the noise in the observations.

Our next assumption governs the size of the least squares estimate $\hat\theta$.
\begin{assumption}
	$\| \hat\theta \|$ is $O(1)$ (i.e.\ it does not grow with $N$).
	\label{assum:orderThetaHat}
\end{assumption}
Again, this is a lax assumption.
For example, given any statistical model for which $\hat\theta$ is a consistent estimator for some quantity, \cref{assum:orderThetaHat} holds.

Our next assumption constrains the uniformity of the left-singular value matrix $U$ with rows $u_n \in \R^D$.
\begin{assumption}
	We have $\max_n \nun := \numax = O(N^{-p})$ for some $p > 1/2$.
	\label{assum:ordernu}
\end{assumption}
\cref{assum:ordernu} is an assumption about the coherence of the $U$ matrix, a quantity of importance in compressed sensing and matrix completion \citep{candes:2009:matrixCompletion}. In particular,
\cref{assum:ordernu} requires that the coherence of $U$ decay sufficiently fast as a function of $N$.
Suppose we remove the condition that $U$ have zero-mean columns (see \cref{cond:dataProcessing} and the discussion after \cref{rem:reduction}) and assume a uniform distribution over valid $U$ (i.e., matrices with orthonormal columns); then \cref{assum:ordernu} is known to hold with high probability for any $p \in (1/2,1)$ \citep[Lemma 2.2]{candes:2009:matrixCompletion}.

There do exist matrices $U$ with orthonormal zero-mean columns that do not satisfy \cref{assum:ordernu}.
For instance, take some small $N_0$ (say $N_0 = 5$) and a valid $U'$ for this $N_0$. Then, for $N > N_0$, form $U$ by appending $\mathbf{0} \in \R^{(N-N_0)\times D}$ to the bottom of $U'$.
This construction yields an $N \times D$ matrix $U$ with orthonormal and zero-mean columns
for which the largest $\nun$ (across $n \in \{1,\ldots,N\}$) is constant as $N$ grows.
Still, in our experiments in \cref{sec:uniformSpectrumExperiments,app:orderNu}, we confirm that, for a uniform distribution over orthonormal $U$ with zero-mean columns, \cref{assum:ordernu} holds with high probability.

Our final assumption is a technical assumption relating $\nun$, $\ehat$, and $\hat\theta$.
\begin{assumption}
	The following quantity is positive and $\Theta(1)$ (i.e.\ is bounded away from zero and does not grow with $N$):
	$%\begin{equation}
		\| \hat\theta \|^2  - \sum_{n=1}^N \nun \left( \uth^2 + 2\ehat^2 \right)
		%&\sum_{n=1}^N (1-\nun) \uth^2 - 2\nun\ehat^2 \label{positiveCoeffs}.
	$%\end{equation}
	\label{assum:positiveCoeffs}
\end{assumption}
Roughly, this assumptions means that the largest $\nun$ and $\ehat^2$ values do not occur for the same values of $n$.
To see this relation, note that \cref{assum:ordernu} implies $\| \hat\theta\|^2 -  \sum_n \nun \uth^2 \geq (1-O(N^{-p}) )\| \hat\theta\|^2$.
If we assume that $\| \hat\theta \|^2 = \Theta(1)$ (i.e.\ \cref{assum:orderThetaHat} holds and $\hat\theta$ does not converge to $\mathbf{0} \in \R^D$), then we find
$
	\|\hat\theta\|^2 - \sum_n \nun \uth^2 = \Theta(1).
$
So, we need only that $\sum_n \nun \ehat^2 = o(1)$ for \cref{assum:positiveCoeffs} to hold;
e.g.\ we need that the largest values of $\nun$ and the largest values of $\ehat^2$ typically do not occur for the same values of $n$.

With our assumptions in hand, we can now state our main theorem. Our theorem relates the uniformity of the singular values of $X$ to the quasiconvexity of $\L$.
As we have shown in \cref{prop:whatMatters}, the scaling of the singular values does not matter for the quasiconvexity of $\L$. We therefore take the singular values to be nearly uniform around $\1 \in \R^D$.
\begin{thm} \label{thm:uniformSpectrum}
	Take \cref{assum:ONehat,assum:orderThetaHat,assum:ordernu,assum:positiveCoeffs}.
	For $N$ sufficiently large, there is a neighborhood $\Delta$ of $\mathbf{1} \in \R^D$ such that if the spectrum $S \in \Delta$, $\L(\lambda)$ is quasiconvex.
\end{thm}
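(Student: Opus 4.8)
The plan is to reduce the theorem to a one-dimensional sign-change condition and then verify that condition from the leave-one-out shortcut formula, first in the exactly uniform case and then by perturbation. First I would use \cref{prop:whatMatters} and \cref{rem:reduction} to assume $V = I_D$ and $\|Y\| = 1$, so that $X = US$ with $S$ diagonal and $U$ having columns orthogonal to $\1$. Writing $H_{nn}(\lambda)$ for the $n$th diagonal entry of the ridge hat matrix $X(X^TX + \frac{\lambda}{2} I)^{-1} X^T$ and $r_n(\lambda)$ for the $n$th training residual, the standard LOOCV identity gives $\L(\lambda) = \sum_{n=1}^N r_n(\lambda)^2 / (1 - H_{nn}(\lambda))^2$. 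Since quasiconvexity in one variable is invariant under any continuous monotone reparametrization of the domain, and since a $C^1$ function of one variable is quasiconvex exactly when $\L'$ changes sign at most once (and, if it does, from negative to positive), it is enough to establish such a single sign change; a convenient sufficient reformulation is that every stationary point of $\L$ be a strict local minimum.

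Next I would analyze the exactly uniform spectrum, where the diagonal of $S$ equals $\1$ (equivalently $S = I_D$, which suffices by the scaling freedom in \cref{prop:whatMatters}). Then $X^T X = I_D$, the ridge fit shrinks every coordinate by the same factor, and it is natural to reparametrize by the shrinkage scalar $\mu := (1 + \lambda/2)^{-1} \in (0,1]$, which is strictly decreasing in $\lambda$ and hence preserves quasiconvexity. Writing $f_n := (U U^T Y)_n$ for the least-squares fit at $\lambda = 0$, one computes $H_{nn} = \mu \nun$ and $r_n = \mu f_n - y_n = (\mu - 1) f_n - \ehat$, and differentiating gives
\[
\frac{d\L}{d\mu} = \sum_{n=1}^N \frac{2\, a_n(\mu)\, C_n}{(1 - \mu \nun)^3}, \qquad a_n(\mu) := (\mu - 1) f_n - \ehat, \qquad C_n := f_n(1 - \nun) - \nun \ehat,
\]
where, crucially, $C_n$ does not depend on $\mu$. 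If all the squared row norms $\nun$ were equal, the common denominator would factor out and $d\L/d\mu$ would have the sign of a single affine function of $\mu$, yielding exactly one crossing. Its orientation and location are pinned down by two facts: the least-squares residual $\Ehat$ is orthogonal to the fit $f$, so the cross sums $\sum_n f_n \ehat$ vanish; and, using $\uth = f_n$ and $\|\hat\theta\|^2 = \|f\|^2$ at $S = I_D$, the leading slope of that affine function coincides, up to the lower-order terms handled below, with the quantity in \cref{assum:positiveCoeffs}, which that assumption forces to be positive and $\Theta(1)$. This makes the crossing transversal and oriented from negative to positive, so $\L$ is quasiconvex in this idealized equal-$\nun$ setting.

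The heart of the argument, and the step I expect to be the main obstacle, is to show that the true inequalities among the $\nun$ and the departure of the spectrum from $\1$ act only as harmless perturbations. For the row norms I would expand $(1 - \mu \nun)^{-3} = 1 + O(\numax)$ uniformly in $\mu \in [0,1]$ via \cref{assum:ordernu}, so that $d\L/d\mu$ equals the single-crossing affine-dominated term above plus a remainder $R(\mu)$ gathering all $\nun$-weighted corrections — sums such as $\sum_n \nun \ehat^2$, $\sum_n \nun f_n \ehat$, and $\sum_n \nun \uth^2$. The combined content of \cref{assum:ONehat,assum:orderThetaHat,assum:ordernu,assum:positiveCoeffs} is exactly that these corrections are $o(1)$ relative to the $\Theta(1)$ leading slope, with $R$ small in the $C^1$ sense (its derivative also $o(1)$); a $C^1$-small perturbation of a function with a single transversal zero cannot create a second sign change once $N$ is large, so the lone negative-to-positive crossing persists.

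Finally I would pass from $S = I_D$ to a neighborhood $\Delta$ of $\1$. Away from the uniform spectrum there is no single shrinkage scalar, so I would work directly with $\L'(\lambda)$, viewing it (after compactifying $\lambda \in (0,\infty)$ to $\mu \in (0,1)$, which also controls the limits $\lambda \to 0$ and $\lambda \to \infty$) as a smooth function of $\lambda$ and of the entries of $S$. Because the $S = \1$ derivative has a single transversal zero with a margin that is $\Theta(1)$ — precisely the margin furnished by \cref{assum:positiveCoeffs} — continuity, uniform on the compact $\mu$-range, guarantees that for each sufficiently large $N$ there is a neighborhood $\Delta$ on which $\L'$ still exhibits a single negative-to-positive sign change, i.e.\ on which $\L$ remains quasiconvex. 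The delicate point throughout is the finite-$N$, uniform-in-$\mu$ control of the remainder $R$ and of the $S$-perturbation against this $\Theta(1)$ margin, which is where the deterministic bookkeeping encoded in \cref{assum:positiveCoeffs} does the real work.
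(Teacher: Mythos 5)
Your route is genuinely different from the paper's, and in outline it is sound --- arguably cleaner. The paper works in $\lambda$ directly: it writes $\L'$ as a ``nearly quadratic,'' localizes every zero of $\L'$ to a bounded interval $[0,\lambda_Q+o(1)]$ (\cref{lm:app-LprimeRootBound}), and separately shows $\L''>0$ on that region (\cref{lm:app-secondDerivPositive}), invoking the stationary-point sufficient condition for quasiconvexity. Your reparametrization $\mu=(1+\lambda/2)^{-1}$ collapses both steps into one: the factorization $d\L/d\mu=\sum_n 2a_n(\mu)C_n(1-\mu\nun)^{-3}$ with $C_n$ independent of $\mu$ is correct (I verified it), the leading term is affine in $\mu$ with slope $2\big[\sum_n(1-\nun)\uth^2-\sum_n\nun\ehat\uth\big]$, which is positive and $\Theta(1)$ by \cref{assum:positiveCoeffs} together with the Cauchy--Schwarz cross-term bound (your orthogonality step is exactly the paper's \cref{prop:app-crossTerm}), so $d\L/d\mu$ is strictly increasing --- i.e.\ $\L$ is convex in $\mu$ --- hence quasiconvex in $\lambda$. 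Your compactification to $\mu\in[0,1]$ also makes the final continuity-in-$S$ step tighter than the paper's, since uniformity over the whole $\lambda$-range comes for free from compactness.

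There is, however, one genuine error in your perturbation bookkeeping. You place $\sum_n\nun\ehat^2$ in the remainder $R$ and assert the assumptions make it $o(1)$. They do not: \cref{assum:ordernu,assum:ONehat} give only $\sum_n\nun\ehat^2\le\numax\|\Ehat\|^2=O(N^{1-p})$, which diverges for $p\in(1/2,1)$, and even adding \cref{assum:positiveCoeffs,assum:orderThetaHat} caps this sum at $O(1)$ (positivity of $\|\hat\theta\|^2-\sum_n\nun(\uth^2+2\ehat^2)$ with $\|\hat\theta\|^2=O(1)$), never $o(1)$. If $R$ really contained this term, your claim that $R$ is $C^1$-small would fail. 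The fix is latent in your own formula: $\sum_n\nun\ehat^2$ is precisely the $\mu$-independent constant of the affine main part $\sum_n 2a_n(\mu)C_n$ (its value at $\mu=1$, up to the $o(1)$ cross term), and constants are irrelevant to your strictly-increasing-derivative argument --- only the slope matters. The genuine remainder is $R(\mu)=\sum_n 2a_n(\mu)C_n\big[(1-\mu\nun)^{-3}-1\big]$, every term of which carries an \emph{extra} factor $O(\numax)$, giving
\begin{equation*}
  \sup_{\mu\in[0,1]}|R(\mu)| \;=\; O(\numax)\Big(\|\hat\theta\|^2+\|\hat\theta\|\,\|\Ehat\|+\numax\|\Ehat\|^2\Big) \;=\; O\big(N^{1/2-p}\big)+O\big(N^{1-2p}\big)\;=\;o(1),
\end{equation*}
and similarly for $R'$. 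This single-$\nu$ versus double-$\nu$ distinction is exactly the one the paper maintains: its singly-weighted sums stay inside the quadratic coefficients $\xi_{\cdot 2},\xi_{\cdot 3}$, and only the doubly-weighted deviations (\cref{lm:app-deltaBound}) are shown to vanish. With that correction, your argument goes through.
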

\begin{proof}[Proof sketch:]
For one-dimensional functions $\L$, a sufficient condition for quasiconvexity is that for all $\lambda$ such that $\L'(\lambda) = 0$, we have $\L''(\lambda) > 0$ \citep[Chapter~3.4]{boyd:2009:cvxOpt}. We first bound the region in which $\L'$ can be zero and then show $\L''$ is positive over this region. See \cref{app:uniformSpectrum} for a full proof.
\end{proof}

In \cref{sec:qvxDependence}, we showed it can be difficult to guess when $\L$ is quasiconvex.
But \cref{thm:uniformSpectrum} yields one condition that guarantees $\L$ is quasiconvex:
when $X$ has a nearly uniform spectrum.
A natural question then is: when is the spectrum of $X$ nearly uniform?
As it happens, a uniform spectrum occurs under standard assumptions,
for example,
when the $x_{nd}$ are i.i.d.\ sub-Gaussian random variables.
\begin{defn}[e.g.\ \citep{vershynin:2017:hdpBook}]
	A random variable $Q$ is sub-Gaussian if there exists a constant $c > 0$ such that $\mathbb{E}[\exp\big( Q^2 / c^2 \big) ]\leq 2$.
\end{defn}
\begin{cor} \label{cor:subGaussian}
	Take \cref{assum:ordernu,assum:positiveCoeffs}. Assume we have a well-specified linear model for some $\theta^* \in \R^D$; in particular, assume $y_n = \iprod{x_n}{\theta^*} + \eps_n$, where $\eps_n \overset{i.i.d.}{\sim} \mathcal{N}(0,\sigma^2)$.
	If $\sigma$ is sufficiently small, $\hat\theta$ is consistent for $\theta^*$, and the entries of the covariate matrix $x_{nd}$ are i.i.d.\ sub-Gaussian random variables, then $\L$ is quasiconvex with probability tending to 1 as $N \to \infty$.
\end{cor}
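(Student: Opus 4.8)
The plan is to reduce \cref{cor:subGaussian} to a single application of \cref{thm:uniformSpectrum}: I will show that, with probability tending to $1$, all four of \cref{assum:ONehat,assum:orderThetaHat,assum:ordernu,assum:positiveCoeffs} hold and the scale-normalized spectrum $S$ lies in the neighborhood $\Delta$ of $\mathbf 1 \in \R^D$ furnished by the theorem. Two of the four assumptions, \cref{assum:ordernu,assum:positiveCoeffs}, are taken as hypotheses of the corollary, so it remains to verify \cref{assum:ONehat,assum:orderThetaHat} together with the spectral condition, and then invoke \cref{prop:whatMatters} to ignore the overall scale of $S$.

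First I would dispatch the two remaining assumptions, both of which follow from standard arguments in the well-specified model. For \cref{assum:orderThetaHat}, consistency of $\hat\theta$ for $\theta^*$ (a hypothesis of the corollary) gives $\n{\hat\theta} \to \n{\theta^*}$, a fixed constant, so $\n{\hat\theta} = O(1)$. For \cref{assum:ONehat}, I reuse the computation given after its statement: writing $E$ for the stacked noise vector, the residuals satisfy $\n{\Ehat}^2 = \n{(I_N - UU^T)E}^2 \le \n{E}^2$ because $I_N - UU^T$ is an orthogonal projection, and by the strong law of large numbers $(1/N)\n{E}^2 \to \sigma^2$ almost surely, whence $(1/N)\sum_n \ehat^2 = O(1)$ almost surely.

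The substantive step is to show that the spectrum of $X$ concentrates near uniform. Because the entries $x_{nd}$ are i.i.d.\ sub-Gaussian and $D$ is fixed while $N \to \infty$, the normalized Gram matrix concentrates on the population correlation: after the centering and scaling of \cref{cond:dataProcessing}, the diagonal of $X^T X / N$ is identically $1$, while each off-diagonal entry is a sample correlation of two independent mean-zero columns and is therefore $O_P(N^{-1/2})$. Since $D$ is fixed, a union bound over the $O(D^2)$ entries gives $\n{X^T X / N - I_D}_{\mathrm{op}} \to 0$ in probability, and Weyl's inequality then yields $|\lambda_i(X^T X / N) - 1| \to 0$ for every $i$. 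As $\lambda_i(X^T X / N)$ is the square of the $i$th singular value of $X/\sqrt N$, the scale-normalized singular values converge to $\mathbf 1 \in \R^D$; by \cref{prop:whatMatters} this rescaling of $S$ does not affect quasiconvexity. Hence, for $N$ large, $S$ falls in any fixed-radius neighborhood $\Delta$ of $\mathbf 1$ with probability tending to $1$, and \cref{thm:uniformSpectrum} applies.

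The main obstacle is reconciling the rate of spectral concentration with the size of $\Delta$. The concentration above is at rate $O_P(N^{-1/2})$, so the argument closes provided the radius of $\Delta$ is bounded below independently of $N$. This is where the hypothesis that $\sigma$ is sufficiently small enters: the radius of $\Delta$ produced by \cref{thm:uniformSpectrum} should depend on the slack in \cref{assum:positiveCoeffs}, i.e.\ on how far $\n{\hat\theta}^2 - \sum_n \nun(\uth^2 + 2\ehat^2)$ sits above zero. Smaller noise makes the residuals $\ehat^2$ smaller, so this margin stays close to $\n{\hat\theta}^2 = \Theta(1)$, keeping $\Delta$ of a fixed (noise-dependent but $N$-independent) radius that the concentrating spectrum is guaranteed to reach. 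I would make this quantitative by tracking the dependence of $\Delta$'s radius on that margin through the proof of \cref{thm:uniformSpectrum}, which is the one place the argument is not merely an invocation of off-the-shelf concentration.
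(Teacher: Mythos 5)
Your proposal is correct and follows the paper's overall architecture---verify \cref{assum:ONehat,assum:orderThetaHat} in the well-specified model, show the normalized spectrum concentrates at $\mathbf{1} \in \R^D$, rescale via \cref{prop:whatMatters}, and invoke \cref{thm:uniformSpectrum}---but it differs in the concentration mechanism and in one point of care. For the spectrum, the paper does not argue entrywise: it cites a nonasymptotic operator-norm bound for matrices with independent sub-Gaussian isotropic rows (Theorem 4.6.1 of Vershynin, restated as \cref{thm:subGaussianBounds}), applied with $t = N^{1/3}$, to get $s_1, s_D \in [\sqrt{N} - o(\sqrt{N}),\, \sqrt{N} + o(\sqrt{N})]$ with probability $1 - 2e^{-t^2}$. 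Your route---diagonal of $X^TX/N$ pinned at $1$ by \cref{cond:dataProcessing}, off-diagonals $O_P(N^{-1/2})$ by independence of columns, then Weyl---is more elementary and entirely adequate here because $D$ is fixed; the paper's route buys explicit exponential tail bounds and would survive $D$ growing with $N$, but for the stated corollary the two are interchangeable.

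The more interesting divergence is your final paragraph. The paper's proof simply asserts that ``for any neighborhood $\Delta$ of $\mathbf{1} \in \R^D$, the (normalized) singular values will eventually lie within $\Delta$ with arbitrarily high probability,'' implicitly treating $\Delta$ as a fixed, $N$-independent neighborhood---even though \cref{thm:uniformSpectrum} as stated only guarantees, for each large $N$, the existence of \emph{some} neighborhood, whose radius could in principle shrink with $N$. You correctly identified this as the place where the argument could leak, and your proposed fix (tracking the radius of $\Delta$ through the proof of \cref{thm:uniformSpectrum} and tying it to the margin in \cref{assum:positiveCoeffs}) is a sensible way to close it; it also gives the only concrete role for the ``$\sigma$ sufficiently small'' hypothesis, which the paper's own proof never visibly uses. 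So on this point your proposal is, if anything, more careful than the published argument, though you leave the quantitative tracking as a plan rather than carrying it out.
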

\begin{proof}[Proof sketch:]
\cref{assum:ONehat,assum:orderThetaHat} hold for a well-specified linear model.
If the entries of $X$ are i.i.d.\ sub-Gaussian random variables, standard concentration inequalities imply that its spectrum is nearly uniform with high probability; hence the result of \cref{thm:uniformSpectrum} applies.
See \cref{app:subGaussian} for a full proof.
\end{proof}

\section{\cref{thm:uniformSpectrum} in practice}
\label{sec:uniformSpectrumExperiments}

\begin{figure}
	\centering
	\begin{tabular}{cc}
		\includegraphics[scale=0.43]{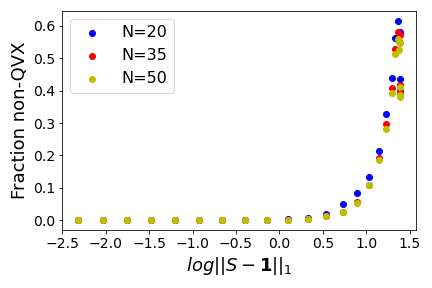} &
		\includegraphics[scale=0.43]{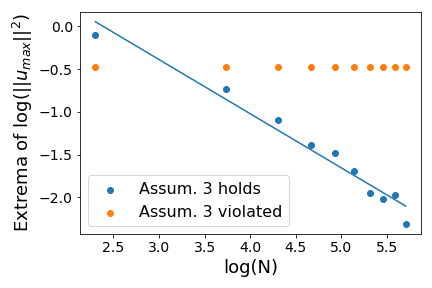}
	\end{tabular}
	\centering
	\includegraphics[scale=0.43]{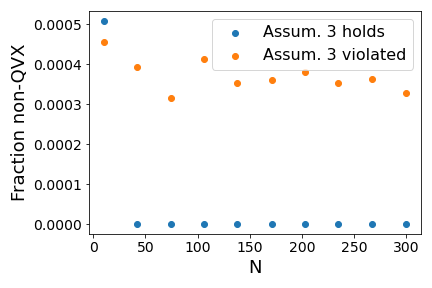}
	\caption{(\emph{Upper left}):
	We generate many datasets and plot the fraction that are not quasiconvex,
	varying $N$ and the distance of the spectrum from uniformity ($\|S - \mathbf{1}\|_1$).
	%For all spectra sufficiently close to uniform ($\|S - \mathbf{1}\|_1$ is small), $\L$ is always quasiconvex.
(\emph{Upper right}): We generate two sets (orange, blue) of left-singular vector matrices $U$. In the blue case, we check that the maximum of $\log \numax$ across all $U$ for a particular $N$ decreases roughly linearly on a log-log plot (i.e.\ the blue set satisfies \cref{assum:ordernu}). In the orange case, we check that the minimum of $\log \numax$ across all $U$ for a particular $N$ is roughly constant (i.e.\ the orange set does not satisfy \cref{assum:ordernu}). (\emph{Lower}): For all the $U$ matrices from the upper right plot,
we generate many datasets and plot the fraction that are not quasiconvex.}
%, one set satisfying \cref{assum:ordernu} and the other violating it. The blue dots plot the $\numax$ averaged over 100 $U$'s satisfying \cref{assum:ordernu} as $N$ grows; the slope on this log-log plot is -0.74, confirming \cref{assum:ordernu}. The orange dots plot the average $\numax$ for the $U$'s violating \cref{assum:ordernu}.
%(\emph{Lower}): Fraction of $\L$ that are non-quasiconvex averaged over 100 settings of the noise in $Y$ and the matrices from the top-right. \cref{thm:uniformSpectrum} holds when \cref{assum:ordernu} is satisfied and does not when \cref{assum:ordernu} is violated.}
	\label{fig:deltaSizeAndNuAssum}
\end{figure}

In \cref{sec:uniformSpectrum}, we established a number of assumptions that we then required in \cref{thm:uniformSpectrum} to prove that $\L$ is quasiconvex. Moreover, \cref{thm:uniformSpectrum} says that this quasiconvexity holds for some unspecified neighborhood $\Delta$ of the uniform spectrum $\mathbf{1} \in \R^D$ and asymptotically in $N$. In this section, we empirically investigate the necessity of our assumptions. And we check how large the neighborhood $\Delta$ in \cref{thm:uniformSpectrum} might be and how quickly quasiconvexity appears as a function of $N$. To that end, we here focus on the small to modest-$N$ regime of $N < 100$.

The only software dependency for our experiments is \texttt{NumPy} \citep{numpy}, which uses the BSD 3-Clause ``New'' or ``Revised'' License.

\textbf{How large is the neighborhood $\Delta$ in \cref{thm:uniformSpectrum}?}
We first show that the neighborhood $\Delta$ is substantial, even for small to moderate $N$.
We fix $D = 5$.
To generate various spectra of $X$, we set $S_d = e^{\alpha d} / e^{\alpha D}$.
For $\alpha \to 0$, we get $S \to \mathbf{1}$; we vary $\alpha$ from zero to one to generate spectra of varying distances from uniformity.
For each $\alpha$, we sample 100 left-singular-value matrices $U$ from the uniform distribution over orthonormal $U$ with column means equal to 0; see \cref{app:zeroMean} for how to generate such matrices.
We fix a unit-norm $\theta^* \in \R^D$ and for each $U$, we generate data from a well-specified linear model, $y_n = \iprod{x_n}{\theta^*} + \eps_n$, where the $\eps_n$ are drawn i.i.d.\ from $\mathcal{N}(0,\sigma^2)$ with variance $\sigma^2=0.5$.
In particular, for each setting of $U$, we generate 100 vectors $Y$. For each setting of $U$ and $Y$, we compute $\L$ and check whether it is quasiconvex.
In the top left panel of \cref{fig:deltaSizeAndNuAssum}, we report the fraction of problems (out of the $100 * 100 = 10{,}000$ datasets for the corresponding $\alpha$ value) with a non-quasiconvex $\L$ versus the distance from uniformity, $\| S - \mathbf{1} \|_1$.
We see that, even for $N = 20$, the fraction of non-quasiconvex problems quickly hits zero as $\| S - \mathbf{1} \|_1$ shrinks.

\textbf{Importance of \cref{assum:ordernu}.} We now establish the necessity of \cref{assum:ordernu} on the decay of $\numax$ with $N$.
To do so, we generate two sets of matrices $U$ as $N$ grows. We generate the first set to satisfy \cref{assum:ordernu}, and we generate the second to violate \cref{assum:ordernu}. In both cases, we will take $D=5$ and ten settings of $N$ between $N = 10$ and $N = 300$.

To generate the assumption-satisfying matrices $U$, we proceed as follows.
For each $N$, we draw 500 matrices $U$ from the uniform distribution over orthonormal $U$ matrices with column means equal to 0.
For each $N$, we plot the \emph{maximum} value of $\numax$ across these 500 $U$ matrices in \cref{fig:deltaSizeAndNuAssum} (top-right) as a blue dot. We fit a line to these values on a log-log plot, and find the slope is -0.74. This confirms that these matrices satisfy \cref{assum:ordernu}.

To generate assumption-violating matrices $U$, we proceed as follows.
Recall that the smallest $N$ is 10 and $D=5$. 100 times, we randomly draw a $U_{\mathrm{small}} \in \R^{8 \times 5}$.
We then construct each $U$ by appending $N-8 \times D$ zeros to $U_{\mathrm{small}}$.
For each $N$, we plot the \emph{minimum} value of $\numax$ across these 100 $U$ matrices in \cref{fig:deltaSizeAndNuAssum} (top-right) as an orange dot.
Since the minimum of $\numax$ is constant with $N$, \cref{assum:ordernu} is violated.

Now we check quasiconvexity. To that end, we randomly select a fixed unit-norm vector $\theta^* \in \R^D$.
For each $N$, we generate 100 noise vectors $E \in \R^N$, where the entries $E_n$ are drawn i.i.d.\ from $\mathcal{N}(0, 0.5)$.
For each $U$ and $E$, we construct $Y = US\theta^* + E$, where $S = \mathbf{1} \in \R^D$.
We then compute the fraction of these ($100*100$ = 10{,}000) losses $\L$ that are non-quasiconvex.
In the lower panel of \cref{fig:deltaSizeAndNuAssum}, we plot this fraction against $N$ for both for the assumption-satisfying case (blue) and the assumption-violating case (orange) in \cref{assum:ordernu}.
When the assumption is satisfied (blue), we see that the conclusion of \cref{thm:uniformSpectrum} holds: beyond a certain $N$, there are no settings of $U$ or $Y$ that generate a non-quasiconvex $\L$. We see that in practice, the boundary $N$ is small or moderate (below 50).
When the assumption is violated (orange dots), we see that the conclusion of \cref{thm:uniformSpectrum} fails to hold: as $N$ grows, there are still settings of $U$ and $Y$ for which quasiconvexity fails to hold.
Finally, we call attention to the vertical axis.
Even in the assumption-violating case, the fraction of non-quasiconvex losses is small. It follows that, even for our degenerate $U$'s, nearly every combination of noise and $U$ leads to a quasiconvex $\L$.
An interesting challenge for future work is to provide a precise characterization of this effect.
\begin{figure}
	\centering
	\begin{tabular}{cc}
		\includegraphics[scale=0.4]{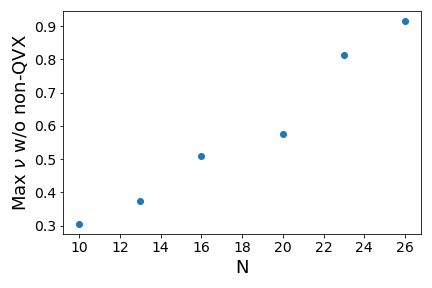}
		\includegraphics[scale=0.4]{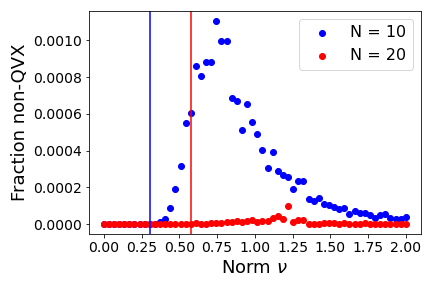}
	\end{tabular}
	\caption{Checking \cref{assum:ONehat}. (\emph{Left}): For each $N$ and each $\nu = \| \hat E\|$, we generate many datasets and check if there is any non-quasiconvex $\L$. We plot the largest $\nu$ for which we find only quasiconvex $\L$.
	The growth is roughly linear, which suggests \cref{assum:ONehat} cannot be loosened.
	(\emph{Right}): For each $N$ and each $\nu = \| \hat E\|$, we generate many data sets and plot the fraction of $\L$ that are non-quasiconvex.
	Vertical lines show $\nu_{max}$ for each $N$.
	%We note that there also appears to be large enough settings of $\nu$ that guarantee quasiconvexity, a behavior which is not described by our theory.
	}
	\label{fig:epsilonSize}
\end{figure}

\textbf{Do the $\ehat$ need to be small (\cref{assum:ONehat})?} Finally, we demonstrate the necessity of \cref{assum:ONehat}, which can be restated as requiring that $\| \Ehat \|^2$ grows at most linearly in $N$. But we also find a suggestion that there may be even more permissive assumptions of interest.
To this end, we vary $N$ from 10 to 30.
For each $N$, we generate 4{,}000 settings of $U$, each uniform over orthonormal $U$ with column means equal to 0.
For each $U$, we generate 250 unit vectors $R$ such that $U^T R= 0$ (each generated uniformly over such $R$ -- see \cref{app:zeroMean}).
Separately, we consider 60 different norms $\nu$ for the vector $\hat E$ equally spaced between $\nu = 0$ and $\nu = 2$; these are the same across $N$. We generate a single unit-norm $\theta^* \in \R^D$.

For each setting of $U$, $R$, and $\nu$, we consider the regression problem with covariate matrix $U\mathbf{1}$ and responses $Y = U\mathbf{1} \theta^* + \hat E$, where $\hat E := \nu R$. We record whether $\L$ is quasiconvex or not for this problem.
For a particular $N$ and particular error-norm $\nu$, we check whether \emph{any} of the $\L$ (across $4{,}000 * 250 = 1{,}000{,}000$ problems) were non-quasiconvex.
Finally, for each $N$, we find the maximum error-norm $\nu_{\max, N}$ such that for all $\nu < \nu_{\max, N}$ every regression problem is quasiconvex. We plot a dot at $(N, \nu_{\max,N})$ in the left panel of \cref{fig:epsilonSize}. We see that in fact the boundary of allowable $\hat E$ norms does grow about linearly in $N$.
Our next plot lends additional insight into how the boundary $\nu_{\max, N}$ varies with $N$ and is also suggestive of other potential variations on \cref{assum:ONehat} that might be of interest. In particular, in the right panel of \cref{fig:epsilonSize}, we consider two particular values of $N$: $N=10$ (blue) and $N=20$ (red). For each setting of $\nu$ on the horizontal axis, we compute the fraction of non-quasiconvex losses $\L$ over all settings of $U$ and $R$ ($4{,}000 * 250 = 1{,}000{,}000$ problems for each $\nu$).

We see that, as expected from the left panel of \cref{fig:epsilonSize}, the boundary $\nu_{\max,N}$ is higher for $N=20$ than for $N=10$. 
Surprisingly, we also see that at high values of $\nu$, the fraction of non-quasiconvex cases decreases again. We conjecture that this trend is more general (beyond these two particular $N$).
Finally, we note that, as in the bottom panel of \cref{fig:deltaSizeAndNuAssum}, the fraction of non-quasiconvex cases across all $\nu$ is low. Again, this small fraction suggests a direction for future work.

\section{Discussion}
\label{sec:discussion}
We have shown that the LOOCV loss $\L$ for ridge regression can be non-quasiconvex in real-data problems. Local optima need not be global optima. These multiple local optima may pose a practical problem for common hyperparameter tuning methods like gradient-based optimizers, which may get stuck in a local optimum, and grid search, for which upper and lower bounds need to be set.

We proved that the quasiconvexity of $\L$ is determined by only a few aspects of a linear regression problem. But we also showed that the quasiconvexity of $\L$ is still a complicated function of the remaining quantities, and as of this writing the nature of this function is far from fully understood. 
Nonetheless, we have provided theory that guarantees at least some useful cases when $\L$ is quasiconvex: when the spectrum of the covariate matrix is sufficiently flat, the least-squares fit $\hat\theta$ fits the data reasonably well, and the left singular vectors of the covariate matrix are regular. In our experiments, we have confirmed that these assumptions are necessary to some extent: when they are not satisfied, $\L$ can be non-quasiconvex.
Still, our empirical results make it clear there is more to be explored.
We describe some of the directions we believe are most interesting for future work below.

\textbf{Sharper characterization of when $\L$ is quasiconvex.} \cref{fig:qvxComplicated} shows that non-quasiconvexity (yellow) disappears as the spectrum of $X$ becomes uniform; however, it is clear that there is very regular behavior to the pattern of quasiconvexity even when the singular values of $X$ are non-uniform.
We are not able to characterize these patterns at this time but believe these patterns pose a fascinating challenge for future work.
Relatedly, our experiments (\cref{sec:uniformSpectrumExperiments}) show that when our assumptions are violated, quasiconvexity of $\L$ is not guaranteed.
However, we have observed that even when $\L$ is not guaranteed to be quasiconvex, many settings of $U$ and $Y$ still give quasiconvexity. In many of our experiments, the fraction of non-quasiconvex losses $\L$ was extremely small.

\textbf{How many local optima and how bad are they?} Without the guarantee of a single, global optimum, it is not clear that we can ever know that we have globally (near-)optimized $\L$.
However, notice that our examples in \cref{fig:nonQVXExample} all have at most two local optima.
In simulated experiments, we also typically encountered two local optima in non-quasiconvex losses, although we have not studied this behavior systematically.
If $\L$ were guaranteed to have only two or some small number of optima, optimization might again be straightforward, even in the case of non-quasiconvexity; an algorithm could search until it finds the requisite number of optima and then report the one with the smallest value of $\L$.
Alternatively, one might hope that all local optima have CV loss (and ideally out-of-sample error) close in value to that of the global optimum. Indeed, this closeness has been recently argued for certain losses in deep learning \citep{kawaguchi:2016:deepLearningLocalMinima}.
Presumably it is not universally the case that local optima exhibit similar loss since the right panel of \cref{fig:nonQVXExample} seems to give a counterexample. But it might be widely true, or true under mild conditions.
Meanwhile, in the absence of such guarantees, optimization of $\L$ should proceed with caution.

\textbf{Beyond ridge regression.} We have shown -- in our opinion -- surprising non-quasiconvexity for the LOOCV loss for ridge regression.
Do similar results hold for simple models outside ridge regression?
The regularization parameter in other $\ell_2$ or $\ell_1$-regularized generalized linear models is often tuned by minimizing a cross-validation loss.
In preliminary experiments, we have found non-quasiconvexity in $\ell_2$-regularized logistic regression.
To what extent do empirical results like those in \cref{fig:qvxComplicated} or theoretical results like those in \cref{thm:uniformSpectrum} hold for other models and regularizers?

\ifarxiv
\subsubsection*{Acknowledgements}
WS and TB thank an NSF Career Award and an ONR Early Career Grant for support. 
MU and ZF gratefully acknowledge support from NSF Award IIS-1943131, the ONR Young Investigator Program, and the Alfred P. Sloan Foundation.	
\fi

\bibliography{references}
\bibliographystyle{plainnat}

\ifarxiv
\else
	\section*{Checklist}
%
%%%% BEGIN INSTRUCTIONS %%%
%The checklist follows the references.  Please
%read the checklist guidelines carefully for information on how to answer these
%questions.  For each question, change the default \answerTODO{} to \answerYes{},
%\answerNo{}, or \answerNA{}.  You are strongly encouraged to include a {\bf
%justification to your answer}, either by referencing the appropriate section of
%your paper or providing a brief inline description.  For example:
%\begin{itemize}
%  \item Did you include the license to the code and datasets? \answerYes{See Section~\ref{gen_inst}.}
%  \item Did you include the license to the code and datasets? \answerNo{The code and the data are proprietary.}
%  \item Did you include the license to the code and datasets? \answerNA{}
%\end{itemize}
%Please do not modify the questions and only use the provided macros for your
%answers.  Note that the Checklist section does not count towards the page
%limit.  In your paper, please delete this instructions block and only keep the
%Checklist section heading above along with the questions/answers below.
%%%% END INSTRUCTIONS %%%
%
\begin{enumerate}

\item For all authors...
\begin{enumerate}
  \item Do the main claims made in the abstract and introduction accurately reflect the paper's contributions and scope?
    \answerYes{}
  \item Did you describe the limitations of your work?
    \answerYes{See \cref{sec:discussion}}.
  \item Did you discuss any potential negative societal impacts of your work? \answerNo{Our work focuses on pointing out flaws with existing methodology and understanding when these flaws do not occur. As these flaws exist independent of any kind of malicious action, it is hard for us to find any negative societal impact of our work.}
  \item Have you read the ethics review guidelines and ensured that your paper conforms to them?
    \answerYes{}
 \end{enumerate}

\item If you are including theoretical results...
\begin{enumerate}
  \item Did you state the full set of assumptions of all theoretical results? \answerYes{}
	\item Did you include complete proofs of all theoretical results? \answerYes{We refer to the relevant part of the appendix after each stated result.}
\end{enumerate}

\item If you ran experiments...
\begin{enumerate}
  \item Did you include the code, data, and instructions needed to reproduce the main experimental results (either in the supplemental material or as a URL)?
    \answerYes{}
  \item Did you specify all the training details (e.g., data splits, hyperparameters, how they were chosen)?
    \answerYes{}
	\item Did you report error bars (e.g., with respect to the random seed after running experiments multiple times)?
    \answerYes{See \cref{app:errorBars} for our experiments replicated with error bars.}
	\item Did you include the total amount of compute and the type of resources used (e.g., type of GPUs, internal cluster, or cloud provider)?
    \answerNo{As we are not proposing a new method that others might use themselves, we do not think compute time is relevant in our experiments.}
\end{enumerate}

\item If you are using existing assets (e.g., code, data, models) or curating/releasing new assets...
\begin{enumerate}
  \item If your work uses existing assets, did you cite the creators? \answerYes{}
  \item Did you mention the license of the assets?
    \answerYes{See \cref{sec:uniformSpectrumExperiments}.}
  \item Did you include any new assets either in the supplemental material or as a URL?
    \answerNA{We do not create new assets besides the code from question 3a.}
  \item Did you discuss whether and how consent was obtained from people whose data you're using/curating?
    \answerYes{See \cref{app:realData}.}
  \item Did you discuss whether the data you are using/curating contains personally identifiable information or offensive content?
    \answerYes{See \cref{app:realData}.}
\end{enumerate}

\item If you used crowdsourcing or conducted research with human subjects...
\begin{enumerate}
  \item Did you include the full text of instructions given to participants and screenshots, if applicable? \answerNA{We do not use crowdsourcing or human subjects.}
  \item Did you describe any potential participant risks, with links to Institutional Review Board (IRB) approvals, if applicable?
     \answerNA{We do not use crowdsourcing or human subjects.}
  \item Did you include the estimated hourly wage paid to participants and the total amount spent on participant compensation? \answerNA{We do not use crowdsourcing or human subjects.}
\end{enumerate}

\end{enumerate}

\fi

%%%%%%%%%%%%%%%%%%%%%%%%%%%%%%%%%%%%%%%%%%%%%%%%%%%%%%%%%%%%
\newpage
\appendix

\section{Real dataset descriptions}
\label{app:realData}

Here, we give the details for the real datasets used to generate \cref{fig:nonQVXExample}.

\paragraph{Life expectancy.} Our first real dataset contains $N = 2{,}938$ observations of life expectancy, along with $D = 20$ covariates such as country of origin or alcohol use. The dataset is available from \citep{rajarshi:2021:lifeExpectancy}.
In this case, $\L$ for the full dataset is quasiconvex. 
But now consider some standard data pre-processing. Practitioners often perform principal component regression (PCR) with the aim of reducing noise in the estimated $\theta$. That is, they take the singular value decomposition of $X = USV$; they then produce an $N \times R$ dimensional covariate matrix $X'$ by retaining just the upper $R$ singular values of $X$: $X' = U_{\cdot,:R} S_{:R}$. If we include this pre-processing step, the resulting LOOCV curve $\L$ is non-quasiconvex for many values of $R$; in the center panel of \cref{fig:nonQVXExample} we show one example for $R = 15$.

This dataset does contain information about people.
However, it is only reported at the aggregated level by a given country per year.
It is not clear to us whether or not consent was obtained by the individuals living in these countries; however, we feel the publication of such data is unlikely to negatively affect any given individual.
Additionally, while we do not know if the data reveals any identifying information about an individual, we feel it is unlikely to do so, as it is published at the country level.

\paragraph{Wine dataset.} Our second dataset consists of recorded wine quality of $N = 1{,}599$ red wines.
The goal is to predict wine quality from $D = 11$ observed covariates relating to the chemical properties of each wine \citep{cortez:2009:wineDataset,cortez:2009:winePaper}.
We find that subsets of this dataset often exhibit non-quasiconvex $\L$.
We search over 400 random subsets of this dataset of size $N = 50$.
In \cref{fig:nonQVXExample}.
Twelve of these led to non-quasiconvex losses $\L$, and \cref{fig:nonQVXExample} shows one of these examples.

This dataset does not contain information about people, and so concerns about consent and personally identifying information do not seem relevant here.
\section{Proof of \cref{prop:whatMatters}}
\label{app:whatMatters}
We now restate and then prove \cref{prop:whatMatters}.
\begin{repprop}{prop:whatMatters}
	Assume \cref{cond:dataProcessing} holds.
	The quasiconvexity of $\L$ is independent of the following
	\begin{enumerate}
		\item The matrix of right singular vectors, $V$
		\item The norm of the responses, $\n{Y}_2$
		\item The scaling of the singular values (i.e.\ changing $S$ into $S / c$ for $c \in \R_{> 0}$)
	\end{enumerate}
	in the sense that altering any of these quantities does not change whether or not $\L$ is quasiconvex.
\end{repprop}
\begin{proof}
First, it's easiest to write our function of interest in a simpler form:
\begin{equation}
	\L (\lambda) = \sum_{n=1}^N \frac{1}{(1-Q_n(\lambda))^2} (x_n^T \hat\theta_\lambda - y_n)^2,
\end{equation}
where $Q_n(\lambda) := x_n^T \big( X^T X + \lambda I_D\big)\inv x_n$ and $\hat\theta_\lambda := \big( X^T X + \lambda I_D)\inv X^T Y$.
\\\\
Let the singular value decomposition of $X$ be $X = USV$. Then:

\paragraph{$\mathbf{V}$ does not affect does not affect the quasiconvexity of $\L$.} To prove this claim, note that $x_n = u_n^T SV$, where $u_n$ is the $n$th row of $U$. So:
$$
	Q_n(\lambda) = u_n^T S V^T \big( V (S^2 + \lambda I_D)\inv V^T \big) V S u_n = u_n^T S(S^2 + \lambda I_D )\inv Su_n.
$$
So $Q_n(\lambda)$ is actually independent of $V$. Next,
\begin{align*}
	x_n^T \hat\theta_\lambda &= u_n^T SV^T V (S^2 + \lambda I_D)\inv V^T V S U^T Y \\
	&= u_n^T S (S^2 + \lambda I_D)\inv S U^T Y,
\end{align*}
which is also independent of $V$.

\paragraph{$\mathbf{\| Y\|_2^2}$ does not affect the quasiconvexity of $\L$.} In particular, we can treat $Y$ as sitting on the $D$-dimensional unit sphere. 
To see this, take two different $Y$'s related by a scaling: $y_n^{(1)} = c y_n^{(0)}$ for some scalar $c \in \R$. Then, using the same superscripts:
\begin{align*}
	& \hat\theta_\lambda^{(1)} = \big( X^T X + \lambda I_D \big)\inv X^T Y^{(1)} = c\hat\theta_\lambda^{(0)}.
\end{align*}
So, we can relate the two LOOCV functions by:
\begin{equation}
	\L^{(1)}(\lambda) = \sum_n \frac{1}{(1-Q_n(\lambda))^2} \big(c x_n^T \hat\theta_\lambda^{(0)} - cy_n^{(0)} \big)^2 = c^2 \L^{(0)}(\lambda).
\end{equation}
So scaling $Y$ by $c$ uniformly scales $\L(\lambda)$ by $c^2$.
Mutliplying $\L$ by a constant does not affect is quasiconvexity.

\paragraph{The scaling of the singular values $s_1, \dots, s_D$ does not affect the quasiconvexity of $\L$.} 
We have already shown that $V$ does not affect the quasiconvexity of $\L$, so fix $V = I_D$ to simplify the proof.
Pick some scaling $c > 0$, and fix some spectrum $S^{(1)}$. 
Define $S^{(0)} := cS^{(1)}$.
Using the same superscripts, we have:
\begin{align}
	Q^{(0)}_n(\lambda) &= \sum_{d=1}^D u_{nd}^2 \frac{c^2s_d^2}{c^2 s_d^2 + \lambda} = c^2 \sum_{d=1}^D u_{nd}^2 \frac{s_d^2}{ s_d^2 + (\lambda / c^2)} = Q^{(1)}\left( \frac{\lambda}{c^2} \right)
	\label{QnScaling}
\end{align}
Similarly, define $(x_n^T \hat\theta)^{(0)}(\lambda)$ to be the inner product of $x_n^{(0)}$ and $\hat\theta^{(0)}(\lambda)$. Then:
\begin{align}
	(x_n^T \hat\theta)^{(0)}(\lambda) &= u_n^T S^{(0)} \bigg( (S^{(0)})^2 + \lambda I_D \bigg) U^T Y \\
	&= u_n^T S^{(1)} \bigg( (S^{(1)})^2 + \frac{\lambda}{c^2} I_D \bigg)\inv U^T Y \\
	&= (x_n^T \hat\theta)^{(1)}\left( \frac{\lambda}{c^2} \right).
\end{align}
This, along with \cref{QnScaling} implies that $\L^{(0)}(\lambda) = \L^{(1)}(\lambda / c^2)$.
That is, multiplying the singular values by $c$ stretches out $\L$ by a factor of $c$.
This does not change the quasiconvexity of $\L$.
\end{proof}
\section{Empirical validation of \cref{assum:ordernu}}
\label{app:orderNu}

\begin{figure}
	\centering
	\includegraphics[scale=0.6]{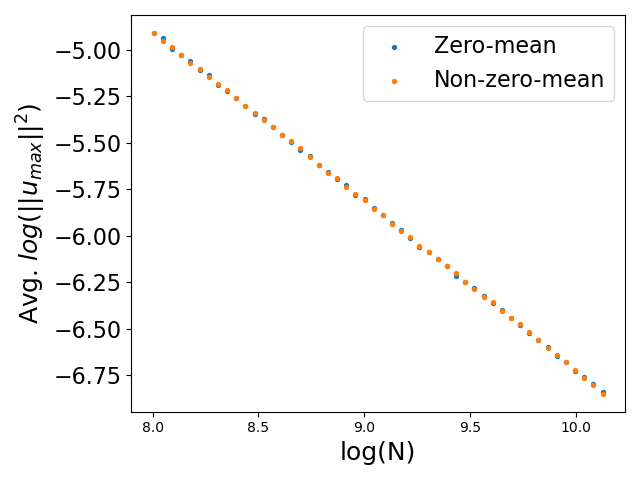}
	\caption{Experiment from \cref{app:orderNu}.
	Orange dots show the decay of $\numax$ for uniformly drawn non-zero-column-mean orthonormal matrices $U$; we see that, as proven by \citet{candes:2009:matrixCompletion}, these matrices satisfy \cref{assum:ordernu}.
	Blue dots show the decay of $\numax$ for uniformly drawn zero-column-mean orthonormal matrices.
	While such matrices are not known to satisfy \cref{assum:ordernu}, we see that their $\numax$ decays at exactly the same rate as the non-zero-column-mean matrices.
	}
	\label{fig:empiricalOrderNu}
\end{figure}

As noted in the main text, \cref{assum:ordernu} can be interpreted as an assumpution about the coherence of the $U$ matrix, a quantity commonly found in the compressed sensing literature \citep{candes:2009:matrixCompletion}.
In particular, \cref{assum:ordernu} requires that the coherence of $U$ decay with $N$ sufficiently fast. 
Similar conditions have been studied in the literature for other matrices.
For example, Lemma 2.2 of \citet{candes:2009:matrixCompletion} shows that if $U$ is drawn uniformly at random from the set of all orthonormal $N \times D$ matrices, then $\max_n \numax = O(\log(N) / N)$ with probability going to 1 as $N \to \infty$.
As $\log(N) / N$ tends towards zero faster than $N^{-p}$ for any $0 < p < 1$, Lemma 2.2 of \citet{candes:2009:matrixCompletion} proves that \cref{assum:ordernu} holds with high probability if $U$ is drawn uniformly from the set of orthogonal matrices.

However, the $U$'s of interest here have an additional constraint: that their columns be zero-mean.
It is not clear how to adopt the proof of \citet{candes:2009:matrixCompletion} to this situation.
Instead, we offer empirical evidence that \cref{assum:ordernu} holds in the case that $U$ is drawn uniformly at random from the set of zero-column-mean matrices.
We describe how to generate such matrices in \cref{app:zeroMean}.
For fifty values of $N$ from $N = 2{,}500$ to $N = 20{,}500$, we draw 750 zero-mean orthonormal matrices $U$ from the uniform distribution.
We plot the average $\numax$ over these 750 replicas on a log scale versus $N$ in \cref{fig:empiricalOrderNu} (orange dots).
For comparison, we plot the average $\numax$ over 750 replicas when $U$ is drawn uniformly from the set of all orthonormal matrices (no zero-mean constraint) as the blue dots.
The decay of $\numax$ with and without the zero-mean constraint is essentially identical.
Given this experiment, we argue that, although theoretically unjustified, \cref{assum:ordernu} places only modest restrictions on the regression problems to which \cref{thm:uniformSpectrum} applies.

\section{Proof of \cref{thm:uniformSpectrum}}
\label{app:proofs}

%\subsection{Proof of \cref{thm:uniformSpectrum} from \cref{sec:uniformSpectrum}}
\label{app:uniformSpectrum}

We begin with some technical lemmas.

\begin{lm} \label{lm:app-perturbSumZero}
	Take real numbers $s_1, \dots, s_N$ and $r_1, \dots, r_N$, where $r_n \in [\ell, u]$ and $\sum_{n=1}^N s_n = 0$. Then:
	$$
		\left| \sum_{n=1}^N r_n s_n \right| 
		  \leq 
		\frac{u-\ell}{2} \sum_{n=1}^N \abs{s_n}.		                         
	$$
\end{lm}
\begin{proof}
	Let $s_n^+ := \max(0, s_n)$ and $s_n^- := \max(0, -s_n)$. Then the condition $\sum_n s_n = 0$ implies that $\sum_n s_n^+ = \sum_n s_n^- = (1/2) \sum_n \abs{s_n}$. So:
	$$
	\left| \sum_{n=1}^N r_n s_n \right|  \leq u\sum_{n=1}^N s_n^+ - \ell \sum_{n=1}^N s_n^- = \frac{u-\ell}{2} \sum_{n=1}^N \abs{s_n}.
	$$
\end{proof}
Now we state a useful consequence of our above assumptions:
\begin{prop}
	Take \cref{assum:orderThetaHat,assum:ordernu,assum:ONehat}. We have:
	$$
		\sum_{n=1}^N (1-\nun) \ehat \uth = o(1)
	$$
	\label{prop:app-crossTerm}
\end{prop}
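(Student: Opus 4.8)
The plan is to exploit the defining orthogonality of the least-squares residuals $\Ehat$ to the column space of $X$, which collapses almost the entire sum, and then to control the small remainder with Cauchy--Schwarz together with the three order assumptions. The only quasi-idea is spotting the exact cancellation; everything after it is routine order-counting.

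First I would record the key identity. Since $\hat\theta = (X^TX)\inv X^TY$ solves the normal equations, $X^T\Ehat = X^T(Y - X\hat\theta) = \mathbf{0} \in \R^D$. Writing $X = USV^T$ with $S$ and $V$ invertible (as $X$ is full rank), this forces $U^T\Ehat = \mathbf{0}$, i.e.\ $\sum_{n=1}^N \ehat\, u_n = \mathbf{0}$, where $u_n \in \R^D$ is the $n$th row of $U$. Dotting against $\hat\theta$ gives the exact identity $\sum_{n=1}^N \ehat\,\uth = (U^T\Ehat)^T\hat\theta = 0$. This is the crux, because it lets me discard the unweighted part of the sum:
\[
\sum_{n=1}^N (1-\nun)\ehat\,\uth = \sum_{n=1}^N \ehat\,\uth - \sum_{n=1}^N \nun\,\ehat\,\uth = -\sum_{n=1}^N \nun\,\ehat\,\uth,
\]
so only the $\nun$-weighted term survives and must be shown to be $o(1)$.

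Next I would bound this remaining term by the triangle inequality (using $\nun \le \numax$) followed by Cauchy--Schwarz:
\[
\left| \sum_{n=1}^N \nun\,\ehat\,\uth \right| \le \numax \sum_{n=1}^N \abs{\ehat\,\uth} \le \numax \sqrt{\sum_{n=1}^N \ehat^2}\,\sqrt{\sum_{n=1}^N \uth^2}.
\]
Here I would invoke three facts. Since $U$ has orthonormal columns, $\sum_{n=1}^N \uth^2 = \hat\theta^T (U^TU) \hat\theta = \n{\hat\theta}^2 = O(1)$ by \cref{assum:orderThetaHat}; next, $\sum_{n=1}^N \ehat^2 = \n{\Ehat}^2 = O(N)$ by \cref{assum:ONehat}; and finally $\numax = O(N^{-p})$ with $p > 1/2$ by \cref{assum:ordernu}. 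Combining these, the bound is $O(N^{-p})\cdot O(N^{1/2})\cdot O(1) = O(N^{1/2 - p})$, which is $o(1)$ precisely because $p > 1/2$. This establishes the proposition.

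The work is genuinely light, and there is no serious obstacle once the cancellation $\sum_n \ehat\,\uth = 0$ is in hand; the Cauchy--Schwarz estimate is the kind of step one should not belabor. I would note for consistency with the hypotheses that \cref{assum:positiveCoeffs} is not needed --- only \cref{assum:ONehat,assum:orderThetaHat,assum:ordernu} are used. The single point deserving a moment's care is the deduction $U^T\Ehat = \mathbf{0}$: it relies on $X$ being full rank (guaranteed since $D < N$) to cancel the invertible factors $S$ and $V$, and it is a property of the least-squares projection rather than of the preprocessing in \cref{cond:dataProcessing}.
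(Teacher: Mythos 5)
Your proposal is correct and follows essentially the same route as the paper: exploit the exact cancellation $\sum_n \ehat \uth = (U^T\Ehat)^T\hat\theta = 0$ to reduce the sum to $-\sum_n \nun \ehat \uth$, then bound that term by Cauchy--Schwarz and count orders using \cref{assum:ONehat,assum:orderThetaHat,assum:ordernu}. The only cosmetic difference is that you bound $\abs{\sum_n \nun \ehat\uth} \le \numax \sum_n \abs{\ehat\uth}$ directly by the triangle inequality, whereas the paper invokes its \cref{lm:app-perturbSumZero} (gaining an irrelevant factor of $1/2$); your justification of $U^T\Ehat = \mathbf{0}$ via the normal equations is actually spelled out more carefully than in the paper.
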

\begin{proof}
	Notice that $\sum_n \ehat \uth = E^T U\hat\theta = 0$. So, we are trying to bound $ \left| \sum_n \nun \ehat \uth \right|$. \cref{assum:ordernu} implies that $\nun$ is $O(N^{-p})$. As $\nun$ is lower bounded by zero, we can apply \cref{lm:app-perturbSumZero} to get the upper bound:
	\begin{equation}
		\left| \sum_{n=1}^N \nun \ehat \uth \right|
		   \leq
		O(N^{-p}) \sum_{n=1}^N \abs{ \ehat \uth}.
	\end{equation}
	By Cauchy-Schwarz, we can upper bound the sum as:
	\begin{equation}
	 	\sum_n \abs{\ehat \uth} \leq \left( \sum_{n=1}^N \ehat^2 \right)^{1/2} \left( \sum_{n=1}^N \uth^2 \right)^{1/2}
	 \end{equation}
	 By \cref{assum:ONehat} and the fact that $\sum_n \uth^2 = \| \hat\theta \|^2$, we have overall:
	 \begin{equation}
	    \left| \sum_{n=1}^N \nun \ehat \uth \right| = O(N^{-p}) O(\sqrt{N}) \n{\hat\theta}^2 = o(1),
	 \end{equation}
	 where the final equality holds because by assumption, $p > 1/2$ and $\| \hat\theta\| = O(1)$
\end{proof}

We restate and prove \cref{thm:uniformSpectrum} from \cref{sec:uniformSpectrum}
\begin{reptheorem}{thm:uniformSpectrum}
	Take \cref{assum:ONehat,assum:orderThetaHat,assum:ordernu,assum:positiveCoeffs}.
	For $N$ sufficiently large, there is a neighborhood $\Delta$ of $\mathbf{1} \in \R^D$ such that if the spectrum $S \in \Delta$, $\L(\lambda)$ is quasiconvex.
\end{reptheorem}
\begin{proof}
We first prove the theorem for an exactly uniform spectrum, $S = \mathbf{1}$. 
To do so, we work with a sufficient condition for a one-dimensional function $\L$ to be quasiconvex: for all $\lambda$ such that $\L'(\lambda) = 0$, we have $\L''(\lambda) > 0$ \citep[Chapter 3.4]{boyd:2009:cvxOpt}.
With this characterization of quasiconvexity in mind, our proof can be broken into two steps.
We sketch each step here and refer to later lemmas for their proofs.
\begin{enumerate}
	\item \textbf{Bound the region where $\L'$ can be zero.} Write $\L'$ as:
	$$
		\L'(\lambda) = \frac{1}{(\lambda + 1)^4} \sum_{n=1}^N \frac{(\lambda + 1)^3}{(\lambda + 1 - \nun)^3} \left( \xi_{n1} \lambda^2 + \xi_{n2}\lambda + \xi_{n3} \right)
	$$
	To find where this can be zero, we can ignore the $1/(\lambda + 1)^4$. Then this is \emph{almost} a quadratic in $\lambda$. Find the most positive root of the quadratic, $\lambda_Q = \Omega(1)$. Bound this function's perturbations away from a quadratic, and bound how much this can increase $\lambda_Q$. \textbf{Result:} $\L'(\lambda)$ can only be zero for $\lambda \in [0, \lambda_Q + o(1)]$. We prove this step in \cref{lm:app-LprimeRootBound} below.
	\item \textbf{Show $\L''(\lambda) > 0$  for any $\lambda \in [0,\lambda_Q+o(1)]$ for which $\L'(\lambda) = 0$.} Essentially the same strategy; write
	$$
	\L'(\lambda) = \frac{1}{(\lambda + 1)^5} \sum_{n=1}^N \frac{(\lambda + 1)^4}{(\lambda + 1 - \nun)^4} \left( a_n \lambda^2 + b_n \lambda + c_n \right)
	$$
	This is a roughly a bowl-down quadratic with only one root bigger than zero; i.e. it is positive for $\lambda \in [0,\lambda_Q']$, where $\lambda_Q' > \lambda_Q + O(1)$ is the location of the quadratic's rightmost root. Show that the  deviations away from quadratic imply that $\L''$ is positive for $\lambda \in [0, \lambda_Q' - o(1)] = [0, \lambda_Q + O(1) - o(1)]$. We prove this step in \cref{lm:app-secondDerivPositive} below.
\end{enumerate}
With the theorem proved for an exactly uniform spectrum, we note that $\L'$ and $\L''$ are continuous functions of the spectrum $S$. As $\L''$ is stricly bounded away from zero on a region that contains $[0,\lambda_Q + O(1)]$, by continuity in the singular values, there is a neighborhood $\Delta$ of $\mathbf{1} \in \R^D$ such that if $S \in \Delta$, $\L''(\lambda) > 0$ for all $\lambda$ for which $\L'(\lambda) = 0$.
\end{proof}
Before proving the lemmas needed for  \cref{thm:uniformSpectrum}, we first prove a technical lemma.
\begin{lm} \label{lm:app-deltaBound}
	Take \cref{assum:ONehat,assum:ordernu}. We have:
	\begin{align}
		-\sum_{n=1}^N \left( \frac{1}{(1-\nun)^3} - 1 \right) \nun \ehat^2 = o(1).
	\end{align}
\end{lm}
\begin{proof}
	First, note that as $0 \leq \nun < 1$, this quantity is strictly negative. 
	So, it suffices to lower bound it by a quantity that is $o(1)$.
	We apply the lower bound
	$$
		-\sum_{n=1}^N \left( \frac{1}{(1-\nun)^3} - 1 \right) \nun \ehat^2
		\geq
		-\left(\frac{1}{(1-\numax)^3} - 1 \right) \numax \sum_{n=1}^N \ehat^2.
	$$
	By a Taylor expansion around $\numax = 0$, we have:
	$$
		= - \bigg( \numax + 6(\numax)^2 + O((\numax)^3) - \numax \bigg) \sum_{n=1}^N \ehat^2.
	$$
	By \cref{assum:ONehat,assum:ordernu}, this is equal to $O(N^{-2p}) O(N) = o(1)$.
\end{proof}
%%%%%%%%%%%%%%%%%%%%%%%%%%%%%%%%%%%%%%%%%%%%%%%%%%%%%%%
%%%%%%%%%%%%%%%%%%%%%%%%%%%%%%%%%%%%%%%%%%%%%%%%%%%%%%%
Before getting into the proofs of our main lemmas, we can first rearrange $\L'$ into a convenient form.
First, we can do some algebra to show that when $S = \mathbf{1} \in \R^D$, we have:
\begin{align}
	\L'(\lambda) = \sum_{n=1}^N & \frac{2}{(1 - \frac{1}{1+\lambda }\nun)^2}  \left(\frac{1}{1+\lambda} u_n^T U^T Y - y_n \right) \\
& * \left[ -\frac{1}{1 - \frac{1}{1+\lambda}\nun} \frac{1}{1+\lambda}\nun \left( \frac{1}{1+\lambda} u_n^T U^T Y - y_n\right) - \frac{1}{(1+\lambda)^2} u_n^T U^T Y \right] \nonumber
\end{align}
Decomposing $Y = X\hat\theta + \hat E$, where $\hat E := Y - X\hat\theta$, we have $U^T \hat E = 0$. After doing some algebra, we obtain:
\begin{align}
	\L'(\lambda) = \frac{2}{(1+\lambda)^4} \sum_{n=1}^N \frac{(1+\lambda)^3}{(1+\lambda-\nun)^3} & \left( (1-\nun) \uth^2 - \nun \ehat^2 + (1-2\nun) \ehat \uth \right) \lambda^2 \nonumber \\
	& + \left( (1-\nun) \uth^2 - 2\nun \ehat^2 + (2-3\nun) \ehat \uth \right) \lambda \nonumber \\
	& + \left( -\nun \ehat^2 + (1-\nun) \ehat \uth \right) \label{quadraticLprime}
\end{align}
Now we can prove the Lemmas needed to prove \cref{thm:uniformSpectrum}.
\begin{lm} \label{lm:app-LprimeRootBound}
	For a flat spectrum $S = \mathbf{1} \in \R^D$, there is some $\lambda_Q$ that is $\Theta(1)$ such that $\L'(\lambda) = 0$ implies that $\lambda \in [0,\lambda_Q + o(1)]$.
\end{lm}
\begin{proof}
	First, we can discard the $1/(1+\lambda)^4$ in front of \cref{quadraticLprime} for the purposes of deciding where $\L' = 0$; let $g(\lambda) = \L'(\lambda) (1+\lambda)^4$:
	\begin{equation} \label{quadraticWithXis}
		g(\lambda) = \sum_{n=1}^N \frac{(1+\lambda)^3}{(1+\lambda - \nun)^3} \left( \xi_{n1} \lambda^2 + \xi_{n2} \lambda + \xi_{n3} \right),
	\end{equation}	
	where $\xi_{n1},\xi_{n2},\xi_{n3} \in \R$ are the appropriate quantites read off from \cref{quadraticLprime}.
	Notice that $g$ is nearly a quadratic; in particular, if $\nun = 0$, then $g$ is a quadratic. The idea is to let $\lambda_Q$ be the rightmost root of the quadratic if $\nun = 0$ for all $n$; we then show that the perturbations away from this quadratic are small enough to imply that all zeros of $g$ lie in $[0, \lambda_Q + o(1)]$.
	
Write $\xi_{\cdot i} := \sum_n \xi_{ni}$. Then, via the quadratic formula, the roots of $g_Q(\lambda) := \xi_{\cdot 1} \lambda^2 + \xi_{\cdot 2} \lambda + \xi_{\cdot 3}$ are
$$
	\lambda = \frac{-\xi_{\cdot 2} \pm \left[ \xi_{\cdot 2}^2 - 4\xi_{\cdot 1} \xi_{\cdot 3}\right]^{1/2}}{2 \xi_{\cdot 1}}.
$$
By \cref{prop:app-crossTerm,assum:positiveCoeffs}, the positive root of this quadratic is positive and $\Theta(1)$. Let this positive root be $\lambda_Q$. Now we need to bound the devitions of $g$ away from the quadratic $g_Q$. Let $\delta(\lambda) := g(\lambda) - g_Q(\lambda)$ be these deviations:
\begin{equation}
	\delta(\lambda) := \sum_{n=1}^N \left( \left( \frac{\lambda+1}{\lambda+1-\nun}\right)^3 - 1 \right) \big( \xi_{n1}\lambda^2 + \xi_{n2}\lambda + \xi_{n3} \big).
\end{equation}
Notice that our quadratic $g_Q$ is convex ($\xi_{\cdot 1} > 0)$ by \cref{assum:positiveCoeffs}. Thus the way to move the roots of $g$ further right than $\lambda_Q$ is to have $\delta(\lambda)$ be negative. 
We can lower bound $\delta(\lambda) \geq \delta(0)$ by noting that $\xi_{\cdot, 1}, \xi_{\cdot 2} > 0$ and $\xi_{\cdot, 3} < 0$. Thus:
\begin{align*}
	\delta(\lambda) &\geq \delta(0) = -\sum_{n=1}^N \left(\frac{1}{(1-\nun)^3} - 1 \right) \nun \ehat^2.
\end{align*}
By \cref{lm:app-deltaBound}, we have that $\delta(0) = o(1)$. 

As we know $g(\lambda) = g_Q(\lambda) + \delta(\lambda) \geq g_Q(\lambda) + \delta(0)$, the final step of our proof is to find the right-most $\lambda$ for which $g_Q(\lambda) = -\delta(0)$, as beyond such a $\lambda$, $g > 0$. 
In fact, an upper bound on this $\lambda$ via convexity will suffice. Using convexity with the fact that $g_Q(\lambda_Q) = 0$, we have that beyond $\lambda = \lambda_Q + \delta(0) / g_Q'(\lambda_Q)$, $g_Q(\lambda) \geq \delta(0)$, and thus $g(\lambda) \geq 0$. 
Given $g_Q'(\lambda_Q) = 2\lambda_Q \xi_{\cdot 2} + \xi_{\cdot 3}$
\begin{equation}
	\lambda_Q + \frac{\delta(0)}{2\lambda_Q \xi_{\cdot 2} + \xi_{\cdot 3}} = \lambda_Q + \frac{o(1)}{O(1)} = \lambda_Q + o(1),
\end{equation}
as claimed.
\end{proof}
%%%%%%%%%%%%%%%%%%%%%%%%%%%%%%%%%%%%%%%%%%%%%%%%%%%%%%%
%%%%%%%%%%%%%%%%%%%%%%%%%%%%%%%%%%%%%%%%%%%%%%%%%%%%%%%
\begin{lm} \label{lm:app-secondDerivPositive}
	Let $\lambda_Q$ be as defined in \cref{lm:app-LprimeRootBound}. For a flat spectrum $S = \mathbf{1} \in \R^D$, we have that $\L''(\lambda) > 0$ for $\lambda \in [0, \lambda_Q + O(1)]$.
\end{lm}
\begin{proof}
	The strategy is similar to the proof of \cref{lm:app-LprimeRootBound}: we show that $\L''$ is nearly a quadratic, find the root of this quadratic, and then show that the location of this root can only change by $o(1)$ due to the deviations away from quadratic.
	\\\\
First, computing $\L''$ from \cref{quadraticLprime} gives
\begin{align}
	\L''(\lambda) = \frac{2}{(1+\lambda)^5} \sum_{n=1}^N \frac{(1+\lambda)^4}{(1+\lambda - \nun)^4} \bigg( & -\xi_{n1}\lambda^2 \\
	& + \big( 2(1-\nun)\xi_{n1} - 2\xi_{n2} \big) \lambda \\
	& + \big( (1-\nun)\xi_{n2} - 3\xi_{n3} \big) \bigg),
\end{align}
where the $\xi_{ni}$'s are as defined in the proof of \cref{lm:app-LprimeRootBound}. As we are interested in the region where $\L'' > 0$, we can neglect the $1/(1+\lambda)^5$ factor in front; define $h(\lambda) := (1+\lambda)^5 \L''(\lambda)$. Now, define the following
\begin{align}
	& a_n := -\xi_{n1}   \\
	& \quad\quad\quad = 2\nun \ehat^2 - (1-\nun) \uth^2 \nonumber \\
	& b_n := 2(1-\nun)\xi_{n1} - 2\xi_{n2} \\
	& \quad\quad\quad =  \big(2(1-\nun)^2 - (1-\nun)\big) \uth^2 + \big( -2(1-\nun)\nun + 4\nun\big) \ehat^2 \nonumber \\
	& \quad\quad\quad\quad + \big(2(1-\nun)(1-2\nun) - 2(2-3\nun)\big) \ehat\uth \nonumber \\
	& c_n := (1-\nun)\xi_{n2} - 3\xi_{n3} \\
	& \quad\quad\quad = (1-\nun)^2 \uth^2 + \big(-2(1-\nun)\nun + 3\nun \big) \ehat^2 \nonumber \\
	& \quad\quad\quad\quad + \big((1-\nun)(2-3\nun) - 3(1-\nun)\big) \ehat\uth \nonumber \\
	& h_Q(\lambda) := \sum_{n=1}^N a_n \lambda^2 + b_n \lambda + c_n \\
	& \delta^{(2)}(\lambda) := \sum_{n=1}^N \left( \left(\frac{1+\lambda}{1+\lambda-\nun}\right)^4 - 1\right) \bigg( a_n \lambda^2 + b_n \lambda + c_n \bigg).
\end{align}
Note that $h = h_Q + \delta^{(2)}$. Let $a_\cdot = \sum_n a_n$, and likewise for $b_\cdot, c_\cdot$. Application of \cref{prop:app-crossTerm,assum:ordernu,assum:ONehat} gives:
\begin{align}
	b_{\cdot} &= \sum_{n=1}^N \bigg( \uth^2  + 2\nun \ehat^2 \bigg) + o(1) \\
	c_\cdot &= \xi_{\cdot 2} - 3\xi_{\cdot 3} + o(1).
\end{align}
In particular, $b_\cdot > 0$ for large enough $N$, and, noting that $\xi_{\cdot 3} < 0$, we have $c_\cdot > 0$ with $\abs{c_\cdot} > \abs{\xi_{\cdot 3}}$ for large enough $N$.
Now, in general $h_Q$ will have two roots:
$$
	\lambda = \frac{-b_\cdot \pm \left[ b_\cdot^2 - 4a_\cdot c_\cdot\right]^{1/2}}{2a_{\cdot}} = \frac{b_\cdot \mp \left[ b_\cdot^2 + 4\xi_{\cdot 1} c_{\cdot}\right]^{1/2}}{2\xi_{\cdot 1}}.
$$
Now, as $[b_\cdot^2 + 4\xi_{\cdot 1} c_\cdot]^{1/2} > b$, only one of these roots is positive; call this root $\lambda'_Q$. We can compare the expression of this positive root to that of $\lambda_Q$ from \cref{lm:app-LprimeRootBound}. 
Noting that $b_\cdot / (2\xi_{\cdot 1}) > 0 > -\xi_{\cdot 2}$, and $c_{\cdot} > 0 > \xi_{\cdot 3}$, where $c_{\cdot} = O(1)$, we have that the positive root is equal to $\lambda_Q + O(1)$. 
Now we need only lower bound $\delta^{(2)}(\lambda)$ on $[0, \lambda_Q + O(1)]$. Using \cref{lm:app-perturbSumZero} with the fact that $(1+\lambda)^4/(1+\lambda - \nun)^4 - 1 = o(1)$ as $\nun \to 0$ by \cref{assum:ordernu}:
\begin{equation}
	\abs{\delta^{(2)}(\lambda)} \leq o(1) \sum_{n=1}^N \abs{a_n O(1) + b_n O(1) + c_n} = o(1) \sum_{n=1}^N O\left( \frac{1}{N} \right) = o(1).
\end{equation}
Thus $h(\lambda) > 0$ for $\lambda \in [0, \lambda_Q + O(1) - o(1)]$.
\end{proof}

\section{Proof of \cref{cor:subGaussian}}
\label{app:subGaussian}
We first give state a theorem about the concentration of i.i.d.\ sub-Gaussian matrices
\begin{thm}[Theorem 4.6.1 from \citet{vershynin:2017:hdpBook}] \label{thm:subGaussianBounds}
Suppose that the $x_n$ are independent sub-Gaussian isotropic random vectors with maximum sub-Gaussian constant $K$. Then for some cosntant $C > 0$ and any $t \geq 0$, the following holds with probability at least $1 - 2e^{-t^2}$:
\begin{equation}
	\sqrt{N} - CK^2 (\sqrt{D} + t) \leq s_D \leq s_1 \leq \sqrt{N} + CK^2 (\sqrt{D} + t).
\end{equation}
\end{thm}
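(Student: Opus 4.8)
The plan is to reconstruct the standard non-asymptotic random-matrix argument underlying \cref{thm:subGaussianBounds} (Theorem 4.6.1 of \citet{vershynin:2017:hdpBook}). The key reduction is that the extreme singular values $s_1, s_D$ of $X$ are controlled by how close the normalized Gram matrix $\frac{1}{N} X^T X = \frac{1}{N}\sum_{n=1}^N x_n x_n^T$ is to the identity. By isotropy, $\mathbb{E}[x_n x_n^T] = I_D$, so $\frac{1}{N} X^T X$ is an empirical average with mean $I_D$. I would first invoke the elementary deterministic fact that if $\|\frac{1}{N}X^T X - I_D\| \leq \max(\delta,\delta^2)$ in operator norm, then $1-\delta \leq s_D/\sqrt{N} \leq s_1/\sqrt{N} \leq 1+\delta$. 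Hence it suffices to bound $\|M\|$, where $M := \frac{1}{N}X^T X - I_D$, by $\max(\delta,\delta^2)$ for the choice $\delta := CK^2(\sqrt{D}+t)/\sqrt{N}$.

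To bound $\|M\|$, I would discretize the sphere. Since $M$ is symmetric, $\|M\| = \max_{x\in S^{D-1}} |\langle Mx, x\rangle|$, and choosing a $1/4$-net $\mathcal{N}$ of $S^{D-1}$ of cardinality at most $9^D$ gives $\|M\| \leq 2\max_{x\in\mathcal{N}}|\langle Mx,x\rangle|$. For each \emph{fixed} $x\in S^{D-1}$ one has the scalar identity
\[
  \langle Mx, x\rangle = \frac{1}{N}\sum_{n=1}^N \langle x_n, x\rangle^2 - 1 .
\]
Each $\langle x_n, x\rangle$ is sub-Gaussian with norm at most a constant multiple of $K$, so $\langle x_n, x\rangle^2 - 1$ is sub-exponential with norm of order $K^2$ and is centered because isotropy gives $\mathbb{E}\langle x_n, x\rangle^2 = \|x\|^2 = 1$. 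Bernstein's inequality for the sum of these independent sub-exponential variables then yields, for fixed $x$,
\[
  \Pr\Big( |\langle Mx, x\rangle| \geq \frac{\delta}{2} \Big) \leq 2\exp\Big( -cN\min\big( \delta^2/K^4, \delta/K^2 \big) \Big) .
\]

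Taking a union bound over the net bounds the failure probability by $9^D \cdot 2\exp(-cN\min(\delta^2/K^4,\delta/K^2))$. With $\delta = CK^2(\sqrt{D}+t)/\sqrt{N}$ the exponent $N\min(\delta^2/K^4,\delta/K^2)$ is at least a constant times $D+t^2$, so for a large enough absolute constant $C$ the net factor $9^D = e^{D\ln 9}$ is absorbed and the probability reduces to $2e^{-t^2}$. Substituting this $\delta$ back through the deterministic reduction of the first step gives $\sqrt{N}(1-\delta) \leq s_D \leq s_1 \leq \sqrt{N}(1+\delta)$, i.e.\ $\sqrt{N} - CK^2(\sqrt{D}+t) \leq s_D \leq s_1 \leq \sqrt{N}+CK^2(\sqrt{D}+t)$, as claimed.

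The main obstacle is the concentration step together with the careful tracking of the sub-Gaussian constant $K$ throughout. One must verify that $\langle x_n, x\rangle^2$ is genuinely sub-exponential with norm of order $K^2$, and then follow how this produces the $K^4$ in the Gaussian-type regime and the $K^2$ in the exponential-type regime of Bernstein's inequality, since these control the exact $K$-dependence in the final bound. The delicate interplay is with the net: one needs $\delta$ of order $\sqrt{D/N}$ to be large enough that the exponent $N\delta^2/K^4$ dominates the entropy term $D\ln 9$, and it is precisely this balance that forces the $\sqrt{D}+t$ scaling appearing in the statement.
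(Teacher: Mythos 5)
The paper does not actually prove this statement---it is imported verbatim (with citation) as Theorem 4.6.1 of Vershynin's book---and your proposal correctly reconstructs precisely the argument Vershynin gives there: the deterministic reduction of $s_1,s_D$ to the bound $\|\frac{1}{N}X^TX - I_D\| \leq \max(\delta,\delta^2)$, a $1/4$-net of cardinality at most $9^D$, Bernstein's inequality for the centered sub-exponential variables $\langle x_n,x\rangle^2 - 1$ with $\psi_1$-norm of order $K^2$, and a union bound absorbed by choosing $C$ large. One bookkeeping slip worth fixing: in the Bernstein step the threshold should be $\max(\delta,\delta^2)/2$ rather than $\delta/2$, since it is exactly the identity $\min(\epsilon,\epsilon^2)=\delta^2$ for $\epsilon=\max(\delta,\delta^2)$ that keeps the exponent at order $(\sqrt{D}+t)^2 \geq D+t^2$ in all regimes; with threshold $\delta/2$ alone, when $\delta > K^2$ the sub-exponential branch of Bernstein gives only $cC(\sqrt{D}+t)\sqrt{N}$, which need not dominate the entropy term $D\ln 9$ when $N \ll D$, so your claim that the exponent is always at least a constant times $D+t^2$ would fail there as written.
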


We now restate and then prove \cref{cor:subGaussian}. Let $s_D \leq \dots \leq s_1$ be the singular values of $X$.
\begin{repcor}{cor:subGaussian}
	Take \cref{assum:ordernu,assum:positiveCoeffs}. Assume we have a well-specified linear model for some $\theta^* \in \R^D$; that is, $y_n = \iprod{x_n}{\theta^*} + \eps_n$, where $\eps_n \overset{i.i.d.}{\sim} \mathcal{N}(0,\sigma^2)$. 
	If $\sigma$ is sufficiently small, $\hat\theta$ is consistent for $\theta^*$, and the entries of the covariate matrix $x_{nd}$ are i.i.d.\ sub-Gaussian random variables, then $\L$ is quasiconvex with probability tending to 1 as $N \to \infty$.
\end{repcor}
\begin{proof}
	The idea is to show that our assumptions imply that \cref{assum:ONehat,assum:orderThetaHat} hold, as well as that the spectrum of $X$ becomes uniform with high probability.
	Our assumption that $\hat\theta$ is consistent for $\theta^*$ immediately implies that $\| \hat\theta \| = O(1)$.
	Next, as discussed after \cref{assum:ONehat}, we can stack the $\eps_n$ into a vector $E \in \R^N$. We then have that $\| E \|^2 \geq \|(I_N - UU^T) E \|^2 = \| \hat E \|^2 = \sum_n \ehat^2$.
	As $\| E \|^2 = O(N)$, we have that $\sum_n \ehat^2 = O(N)$, which implies that \cref{assum:ONehat} holds.
	
	Now, by \cref{thm:subGaussianBounds} with $t = N^{1/3}$ (any $t$ that goes to infinity with $N$ but is $o(\sqrt{N})$ will work), we have that the singular values of $X$ satisfy
	\begin{equation}
		\sqrt{N} - o(\sqrt{N}) \leq s_D \leq s_1 \leq \sqrt{N} + o(\sqrt{N}).
	\end{equation}
	with probability at least $1 - o(1)$.
	By \cref{prop:whatMatters}, the quasiconvexity of $\L$ is invariant to a scaling of the singular values; thus we can divide all singular values by $\sqrt{N}$ to get that all singular values lie in the interval $[1-o(1), 1+o(1)]$ with probability going to 1.
	Thus, for any neighborhood $\Delta$ of $\mathbf{1} \in \R^D$, the (normalized) singular values will eventually lie within $\Delta$ with arbitrariliy high probability.
	Thus all of the conditions of \cref{thm:uniformSpectrum} are met, implying that $\L$ will be quasiconvex with probability tending towards 1 as $N \to \infty$.
	
\end{proof}
\section{Generating zero-mean orthonormal matrices uniformly at random}
\label{app:zeroMean}

In our experiments in \cref{sec:uniformSpectrumExperiments}, we draw $N \times D$ orthonormal matrices $U$ with zero-mean columns from the uniform distribution over such matrices.
To do so, we generate vectors $a_1, \dots, a_D \in \R^D$ such that $a_{nd} \overset{i.i.d.}{\sim} \mathcal{N}(0,1)$.
We then use the Gram-Schmidt process to orthogonalize the vectors $\{\mathbf{1}, a_1, \dots, a_D\}$; the second through $D+1$th outputted vectors make up the columns of $U$.
Notice this procedure requires $N < D$.

In our experiments surrounding \cref{assum:ONehat}, we need to generate vectors $R$ such that $U^T R = 0$ uniformly over such $R$'s.
To do so, we generate a vector $a \sim \mathcal{N}(0, I_N)$.
We then compute $b = (I_N - UU^T)a$; setting $R = b / \| b \|$ yields the result.

Why is this uniform over all vectors in the null space of $U$? 
Recall that $a$ is an isotropic random vector.
It is well-known that this implies that $a / \| a \|$ is uniform over the unit sphere.
As multiplication by $I_N - UU^T$ is an orthogonal projection, we have that $b = (I_N - UU^T)a$ is isotropic over the null space of $U$.
Thus $b / \| b \|$ is uniform over the null-space of $U$.
The same reasoning shows that the use of the Gram-Schmidt algorithm to generate orthonormal zero-column-mean matrices $U$ is uniform over such matrices -- we start with isotropic random vectors, Gram Schmidt applies orthogonal projections to each and then normalizes the results.
\section{Replicating experiments with error bars}
\label{app:errorBars}
\begin{figure}
	\centering
	\begin{tabular}{cc}
		\includegraphics[scale=0.43]{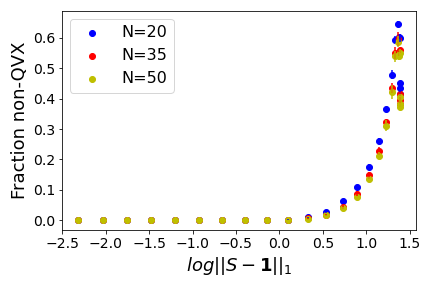} &
		\includegraphics[scale=0.43]{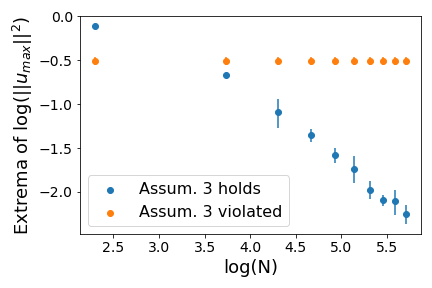}
	\end{tabular}
	\centering
	\includegraphics[scale=0.43]{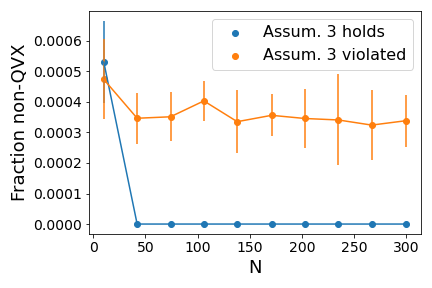}
	\caption{
	Replication of \cref{fig:deltaSizeAndNuAssum} with error bars; we repeat the original caption here for reading convenience.
	(\emph{Upper left}):
	We generate many datasets and plot the fraction that are not quasiconvex,
	varying $N$ and the distance of the spectrum from uniformity ($\|S - \mathbf{1}\|_1$).
(\emph{Upper right}): We generate two sets (orange, blue) of left-singular vector matrices $U$. In the blue case, we check that the maximum of $\log \numax$ across all $U$ for a particular $N$ decreases roughly linearly on a log-log plot (i.e.\ the blue set satisfies \cref{assum:ordernu}). In the orange case, we check that the minimum of $\log \numax$ across all $U$ for a particular $N$ is roughly constant (i.e.\ the orange set does not satisfy \cref{assum:ordernu}). (\emph{Lower}): For all the $U$ matrices from the upper right plot,
we generate many datasets and plot the fraction that are not quasiconvex.}
\label{fig:app-errorBars}
\end{figure}
For each of the experiments in \cref{fig:deltaSizeAndNuAssum} (our experiments about the size of the neighborhood $\Delta$ and $U$'s violating \cref{assum:ordernu}), we repeat the experiment five times to understand the random variability in each experiment.
\cref{fig:app-errorBars} shows the result. 
All plots are created exactly as in \cref{fig:deltaSizeAndNuAssum}, except each dot is now an average over all five trials.
Error bars are equal to two times the standard deviation across these five trials.
We see that our conclusions from \cref{fig:deltaSizeAndNuAssum} still hold.
In fact, many of the error bars are so small that they are barely visible on the scale of these plots.

\end{document}